\newcommand{\RN}[1]{%
  \textup{\uppercase\expandafter{\romannumeral#1}}%
}
\definecolor{ultramarine}{rgb}{0.,0.1,0.9}
\definecolor{darkgreen}{RGB}{1,130,32}
\let\oldthanks\thanks
\renewcommand{\thanks}[1]{\let\footnotemark\relax\oldthanks{#1}}
\theoremstyle{plain}
\title{How Can LLM Guide RL? A Value-Based Approach}
\author{\normalsize
  Shenao Zhang\textsuperscript{\textnormal{1}*} \thanks{\textsuperscript{\textnormal{1}}Northwestern University. } \quad
  Sirui Zheng\textsuperscript{\textnormal{1}*}  \quad
    Shuqi Ke\textsuperscript{\textnormal{2}}\thanks{\textsuperscript{\textnormal{2}}The Chinese University of Hong Kong.}  \quad
  Zhihan Liu\textsuperscript{\textnormal{1}}  \quad
   Wanxin Jin\textsuperscript{\textnormal{3}}\thanks{\textsuperscript{\textnormal{3}}Arizona State University.}  \\
\normalsize
    Jianbo Yuan\textsuperscript{\textnormal{4}}\thanks{\textsuperscript{\textnormal{4}}ByteDance Inc.}  \quad
    Yingxiang Yang\textsuperscript{\textnormal{4}} \quad
    Hongxia Yang\textsuperscript{\textnormal{4}} \quad
  Zhaoran Wang\textsuperscript{\textnormal{1}}\thanks{*Equal contribution.}
}
\begin{document}
\date{}
\maketitle
\begin{abstract}
Reinforcement learning (RL) has become the de facto standard practice for sequential decision-making problems by improving future acting policies with feedback. However, RL algorithms may require extensive trial-and-error interactions to collect useful feedback for improvement. On the other hand, recent developments in large language models (LLMs) have showcased impressive capabilities in language understanding and generation, yet they fall short in exploration and self-improvement capabilities for planning tasks, lacking the ability to autonomously refine their responses based on feedback. Therefore, in this paper, we study how the policy prior provided by the LLM can enhance the sample efficiency of RL algorithms. Specifically, we develop an algorithm named $\algn$ that incorporates LLM guidance as a regularization factor in value-based RL, leading to significant reductions in the amount of data needed for learning, particularly when the difference between the ideal policy and the LLM-informed policy is small, which suggests that the initial policy is close to optimal, reducing the need for further exploration. Additionally, we present a practical algorithm $\mathtt{SLINVIT}$ that simplifies the construction of the value function and employs sub-goals to reduce the search complexity. Our experiments across three interactive environments ALFWorld, InterCode, and BlocksWorld demonstrate that our method achieves state-of-the-art success rates and also surpasses previous RL and LLM approaches in terms of sample efficiency. Our code is available at \url{https://github.com/agentification/Language-Integrated-VI}.\vspace{0.6cm}
\end{abstract}

\section{Introduction}
Trained on the web-scale corpora, Large Language Models (LLMs) have exhibited emergent capabilities and seen tremendous success across various fields, such as code development \citep{chen2021evaluating,roziere2023code,li2023starcoder} and theorem proving \citep{yang2023leandojo,romera2023mathematical}. The recent advances in robotics \citep{huang2023voxposer,liang2023code} and games \citep{wang2023voyager,yuan2023plan4mc,wang2023describe,liu2023reason} further highlight the potential of LLMs to build effective agents in well-designed interactive environments.

However, the reasoning and planning abilities of LLMs, which are important metrics for intelligent agents, have been found to be inconsistent and often unreliable \citep{valmeekam2023planning,mahowald2023dissociating,huang2023large,pallagani2023understanding}. Besides, agents powered by LLMs tend to have limited abilities to explore different strategies, frequently defaulting to repeating established policies. This limitation becomes particularly pronounced in complex decision-making scenarios that LLMs are not specifically attuned to, resulting in significant difficulties in refining their strategies based on environmental feedback by reasoning the environment feedback based solely on its inherent capabilities \citep{valmeekam2023can,shinn2023reflexion,ivanova2023running,zhang2024self}. On the contrary, Reinforcement Learning (RL) is a well-studied methodology for improving future acting policies with feedback. Unfortunately, improving from scratch without the guidance of prior knowledge, such as common sense, requires the RL agents to take a huge amount of random interactions to collect useful feedback, leading to poor sample efficiency and even failure in sparse-reward environments. 

\begin{figure*}[htbp]
    \centering
    \includegraphics[width=\linewidth]{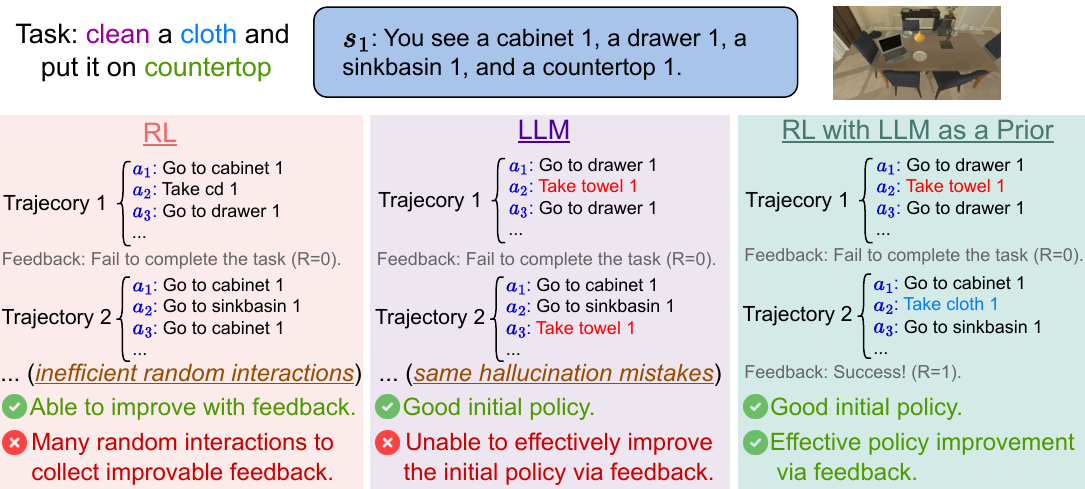}
  \caption{Illustration of the differences and the respective advantages, disadvantages of RL and LLM agents in an instance of the ALFWorld decision-making task. We propose an RL framework leveraging the LLM as a policy prior that gets the best of both worlds.}
    \label{fig_compare}
\end{figure*}

Hence, in this paper, we aim to tackle these issues and answer the following question:
\begin{center}
    \emph{Can we improve the sample efficiency of Reinforcement Learning \\ with Large Language Models?}
\end{center}

Our primary objective is to develop an algorithm that is both theoretically robust and empirically effective, utilizing LLMs to enhance sample efficiency. Our pivotal insight is the utilization of LLMs to define a regularizer, as opposed to directly employing them in decision-making. Leveraging the properties of regularized-MDPs, we find that sample complexity can be significantly reduced when the LLM-provided policy closely aligns with the optimal policy. Moreover, our approach retains the capability to identify the optimal policy even in scenarios where the LLM policy falls short. An illustration comparing the standard RL algorithms, LLM agents, and the RL framework with LLM as a prior is shown in Figure \ref{fig_compare}. To demonstrate this concept, we introduce an algorithm named \textit{Language-INtegrated Value Iteration} ($\mathtt{LINVIT}$), which shows a marked improvement in sample complexity, particularly when the Kullback-Leibler (KL) divergence between the optimal policy and the LLM policy is minimal.

We further present a practical algorithm called $\mathtt{SLINVIT}$ and empirically validate it in various benchmarks, including ALFWorld \citep{shridhar2020alfworld}, the interactive coding environment InterCode \citep{yang2023intercode}, and the planning benchmark BlocksWorld \citep{valmeekam2023planning}. Experimental results show that the proposed algorithm outperforms previous RL and LLM algorithms by a large margin, achieving higher success rates with fewer numbers of samples.

\section{Background}
\paragraph{Reinforcement Learning.}
Consider the problem of learning to optimize a finite $H$-horizon Markov Decision Process (MDP) over repeated episodes of interaction. We denote by $\mathcal{S}$ and $\mathcal{A}$ the state and action space, respectively. When taking an action $a\in\mathcal{A}$ at a state $s\in\mathcal{S}$ at timestep $h$, the agent receives a reward $r_h(s,a)$ and the MDP transits to a new state $s'$ according to $s'\sim P_h^*(\cdot\mid s,a)$.

We aim to find a policy $\pi$ that maps a state to an action distribution to maximize the expected cumulative reward. We denote by $Q^*_h: \mathcal{S}\times\mathcal{A}\rightarrow \mathbb{R}$ and $V^*_h: \mathcal{S}\rightarrow \mathbb{R}$ the state-action value function and the state value function associated with $\pi$, respectively, which are defined as follows,
\$
Q^*_h(s, a) = r_h(s, a) + \sum_{s'}P_h^*(s'\mid s,a)V_{h+1}^t(s'), \qquad V^*_h(s) = \sum_a\pi(a\mid s)Q^*_h(s, a),
\$
where $s\in\mathcal{S}$, $a\in\mathcal{A}$. The objective of the decision-making problem is to maximize the state value at the initial timestep $V^*_1(s_1)$, where $s_1\in\mathcal{S}$ is the initial state.

\paragraph{Decision-Making with LLMs.}
To solve decision-making problems, one can prompt the LLM agent to generate action responses $\pi^{\mathrm{LLM}}_h(a\mid s)$ based on its state $s$, which consists of the observation-action history up to the current timestep $h$ in partially observable MDPs, or contains an additional reasoning step \citep{yao2022react} in a chain-of-thought \citep{wei2022chain} manner. Unfortunately, without the domain knowledge of a specific task or environment, the LLM policy is hard to be optimal, especially when the planning problems have long horizons and the LLM lacks the necessary reasoning abilities. On the contrary, in this work, we explore another manner of leveraging the LLM policy as the prior (or regularization) in RL.
\section{Using Language Model as a Policy Prior}\label{sec:alg}

In this section, we present an algorithm leveraging LLM to enhance sample efficiency. We begin by discussing the algorithm's motivation, followed by a detailed explanation of its procedure.

\textbf{Motivation. }A simplistic approach to using Large Language Models (LLMs) in decision-making is directly applying the LLM-generated policy to target tasks.  However, pretrained and fine-tuned on the static datasets, LLMs are not inherently attuned to the specific interactive environments of concern and are unable to modify their policies based on environmental feedback. Consequently, effectively leveraging LLM information for decision-making remains an unresolved challenge. Inspired by the property of entropy-regularized MDP \citep{neu2017unified}, our approach employs the large language model as a supplemental regularizer within the original algorithm, rather than using it as the primary decision-making tool. This methodology significantly improves sample efficiency when the LLM's policy is closely aligned with the optimal policy. Moreover, we can still identify the optimal policy for the original MDP even if the LLM's policy is suboptimal. 

\begin{algorithm}[h]
    \caption{Language-INtegrated Value Iteration($\mathtt{LINVIT}$)}
    \begin{algorithmic}[1]\label{alg_lsvi}
    \REQUIRE Target precision $\epsilon$, target probability $\delta$, bonus function $b^0$, $b^{0,\KL}$.
    \FOR{$t = 0, \ldots, T$}
    \STATE Construct the model estimator $P_h^t$ and $u_h^t$ as \eqref{eq:hoeffding_transition_bonuses}.
\STATE Compute the optimistic and pessimistic value $\overline{V}_h^t$ and $\underline{V}_h^t$ as \eqref{eq:optimistic_planning_reg} and \eqref{eq:pess_planning_reg}.
    \STATE Compute $\pi^t$ as \eqref{eq:exp_pi}.
    \FOR{$h = 1, \ldots, H$}
    \STATE Sample $a_h^t\sim\pi^t_h(\cdot|s_h^t)$.
    \STATE Observe $s_{h+1}^t$ from the environment.
    \ENDFOR
    \ENDFOR
    \STATE Return $\hat{\pi}$, which the uniform mixture of $\{\bar{\pi}^t\}_{t=1}^T$.
    \end{algorithmic}
    \end{algorithm}

With the above motivation, we propose a novel algorithm \textit{Language-INtegrated Value Iteration} ($\mathtt{LINVIT}$), which is structured in three distinct steps as shown in Algorithm \ref{alg_lsvi}. Initially, we utilize the gathered data to estimate the transition model and calculate the uncertainty associated with our estimation. Subsequently, this estimator is employed to formulate both the optimistic and pessimistic regularized value functions. The final step involves leveraging these value functions to develop an exploration policy, which is then used to acquire additional data from the environment. Each of these steps is elaborated in detail as follows.

    \textbf{Model and Uncertainty Estimation.} We estimate the transition model as follows. Let $n_h^{t}(s,a)$ denote the number of times the state-action pair $(s,a)$ has been visited at step $h$ during the first $t$ episodes, and let $n_h^{t}(s,a,s')$ denotes the number of times the state-action-next-state triplet $(s,a,s')$ at the same step and episode count.
    Our dynamics estimator $P^{t}_h(s'|s,a)$ is defined as $P^{t}_h(s'|s,a) = n^{t}_h(s,a,s') / n^{t}_h(s,a)$ if $n_h^t(s,a) >0$ and $P^{t}_h(s'|s,a) \triangleq 1/S$ for all $s'\in \cS$ else. We then define the uncertainty quantifier $u_h^t$ by
    \#\label{eq:hoeffding_transition_bonuses}
        u^t_h(s,a) &\triangleq \max\biggl\{2H,\sqrt{\frac{\log(4HTS^2A/\delta)}{n_h^t(s,a)}}\biggr\}.
    \#
    Intuitively, the uncertainty quantifier $u^t_h$ is inversely related to the frequency of visits to a state-action pair; the less frequently a state-action pair is visited, the greater the value of $u^t_h$. This relationship means that $u^t_h$ effectively measures our uncertainty regarding each state-action pair, capturing the uncertainty of our estimation.

\textbf{Regularized Value Functions.} After estimating the transition model and the uncertainty, we compute the optimistic and the pessimistic regularized value function  by 
\#\label{eq:optimistic_planning_reg}
  &\overline{Q}_h^t(s,a) = \clip\bigl\{r_h(s,a)+ \sum_{s'} P_h^t(s'\mid s,a)\overline{V}_{h+1}^t(s')+ u^t_h(s,a)\bigr\},\nonumber\\
  &\overline{V}_h^t(s) = \max_{\pi\in\Delta_A} \Bigl\{ \sum_{a}\pi(a|s)\overline{Q}_h^t(s) - \lambda \KL\bigl(\pi(\cdot\mid s) \Vert \pi^{\mathrm{LLM}}_h(\cdot\mid s)\bigr) \Bigr\},
\#
with $\overline{V}^t_{H+1} =\underline{V}^t_{H+1} = 0$ by convention, and 
\#
\clip(x)=\min\Bigl\{\max\bigl\{x,0\bigr\},H\Bigr\}.\nonumber
\# 
The definition of $\overline{Q}$ and $\overline{V}$ comprise three components: the expected reward, the uncertainty estimator, and the regularization defined via $\pi^{\LLM}$. It can be viewed as an optimistic estimation of the regularized value function. We similarly define $\underline{Q}$ and $\underline{V}$  as

\#\label{eq:pess_planning_reg}
    &\underline{Q}_h^t(s,a) = \clip\bigl(r_h(s,a) + \sum_{s'} P_h^t(s'\mid s,a)\overline{V}_{h+1}^t(s') - u^t_h(s,a)\bigr) ,\nonumber\\
  &\underline{V}_h^t(s) = \max_{\pi\in\Delta_A} \Bigl\{ \sum_{a}\pi(a|s)\overline{Q}_h^t(s) - \lambda \KL\bigl(\pi(\cdot\mid s) \Vert \pi^{\mathrm{LLM}}_h(\cdot\mid s)\bigr) \Bigr\}.
\#
Similar to $\overline{Q}$ and $\overline{V}$, $\underline{Q}$ and $\underline{V}$ can be regarded as pessimistic estimations of the regularized value function. The primary distinction between $\underline{Q}$ and $\overline{Q}$ lies in the sign of the uncertainty estimator. 

\textbf{Sampling Policy.} We explore the environment and collect data with $\pi^{t+1}=\{\pi^{t+1}_h\}_{h=1}^H$, which is defined as 
\#\label{eq:exp_pi}
\pi_h^{t}(\cdot\mid s)=\frac{1}{H}\cdot\mathbbm{1}\{a=\argmax \overline{Q}^t_h(s,a)-\underline{Q}^t_h(s,a)\} + \frac{H-1}{H}\cdot\bar{\pi}_h^{t}(\cdot\mid s),
\#
where $\bar{\pi}_h^{t}(\cdot\mid s)= \argmax_{\pi\in\Delta_A}\Bigl\{ \sum_{a}\pi(a\mid s)\overline{Q}_h^t(s)  - \lambda \KL\bigl(\pi(\cdot\mid s) \Vert \pi^{\mathrm{LLM}}_h(\cdot\mid s)\bigr) \Bigr\}$.

Intuitively, the difference $\overline{Q}^t_h(s,a)-\underline{Q}^t_h(s,a)$ captures the uncertainty of the estimation of the regularized value function. As a result, the policy $\pi_h^{t}$ is designed to act predominantly as a greedy policy concerning the optimistic regularized value function, doing so with a probability of $1-1/H$. Conversely, with a probability of $1/H$, it opts for the action associated with the greatest uncertainty. This approach ensures a balance between exploiting known rewards and exploring actions with higher uncertainty to refine the value function estimation.

\section{Experiments}
In this section, we conduct empirical studies in several text-based benchmarks, including the embodied environment ALFWorld \citep{shridhar2020alfworld}, the interactive coding environment InterCode \citep{yang2023intercode}, and the standard planning benchmark BlocksWorld \citep{valmeekam2023planning}. Across these three benchmarks, we measure the algorithm's effectiveness by its success rate, which we define as the ratio of the number of task instances the algorithm successfully completes. More precisely, each task instance is defined by a target state $s_g\in\mathcal{S}$, and a task is deemed successfully completed if the algorithm reaches this target state $s_H=s_g$ at the end.

We will first delve into a detailed discussion of our implementation approach. Following this, we will present and analyze the results of our experiments.

\subsection{Implementation Details} In our experiments, we introduce two primary simplifications to the $\algn$ algorithm to enhance its efficiency and practicality. These modifications lead to a streamlined variant we denote as $\mathtt{SLINVIT}$, detailed in Algorithm \ref{alg_llm_plan}. Below, we elaborate on each simplification.

\textbf{Construction of Value Function and Exploration Policy.} In Algorithm \ref{alg_lsvi}, we use a bonus in the construction of the value function and combine the uniform policy with the optimal policy in the regularized MDP in the exploration policy. This approach, while sample-efficient, introduces significant computational complexity. To optimize computation in our experiments, we simplify these processes. 
Specifically, we directly combine the original value estimator $\hat{V}$ with the log probability of the LLM policy $\mathbb{P}_{\text{LLM}}$, which is the key component in \eqref{eq:optimistic_planning_reg}, to construct the value estimator in the experiment. Furthermore, we adopt a greedy policy with respect to this adjusted, regularized value estimator. This simplification enables a simplier computation by focusing on the core elements that drive the decision-making process.

\textbf{Using Sub-Goals to Reduce Searching Complexity.} Since the complexity of directly searching for the maximizor of the regularized value function is exponential in the horizon $H,$ we leverage sub-goal states to reduce the searching complexity. In the experiment, our algorithm works by decomposing the $H$-horizon planning problem into $H/N$\footnote{For notation convenience, we assume w.l.o.g. that $h$ can be divided by $N$.} sub-problems, each of which has a sub-goal and is of horizon $N$. More specifically, for each sub-problem $i\in[1, H/N]$, the corresponding sub-goal $s_{iN+1}$ is determined by solving
\#\label{eq_sub_problem}
Q^{\mathrm{LLM}}(s_{(i-1)N},a_{(i-1)N+1:iN})&:=\hat{V}(s_{iN+1}) + \sum_{h=(i-1)N+1}^{iN}\lambda\pi_h^{\mathrm{LLM}}(a_h | s_h),\nonumber\\
a_{(i-1)N+1:iN}  &:= \argmax_{a_{(i-1)N+1:iN}} Q^{\mathrm{LLM}}(s_{(i-1)N},a_{(i-1)N+1:iN}).
\#
where $\hat{V}$ is the estimator of the true value $V^\pi$ and $\lambda\geq 0$ is a hyperparameter for the regularization. Compared with the regularized value function $\overline{V}$ in Section \ref{sec:alg}, we remove the logarithm term before $\mathbb{P}_{\mathrm{LLM}}$ for stability, such that it has a similar scale with $r\in[0,1]$. Here, $\hat{V}$ can take various forms depending on both the true policy value it estimates and its own approximators, which we will discuss in more detail in Section \ref{sec_ins_value}. 

\begin{algorithm}[h]
\caption{Simplified Language-INtegrated Value Iteration ($\mathtt{SLINVIT}$)}
\begin{algorithmic}[1]\label{alg_llm_plan}
\REQUIRE Sub-problem horizon $N$, BFS breadth $k$.
\STATE Construct the value estimator $\hat{V}$ (rule-based or Monte-Carlo)
\FOR{$i = 1, \ldots, H/N$}
\STATE Solve \eqref{eq_sub_problem} with breadth-$k$ BFS
\STATE Execute the resulting $a_{(i-1)N+1:iN}$
\ENDFOR
\end{algorithmic}
\end{algorithm}

\begin{figure*}[htbp]
\centering
\includegraphics[width=0.9\textwidth]{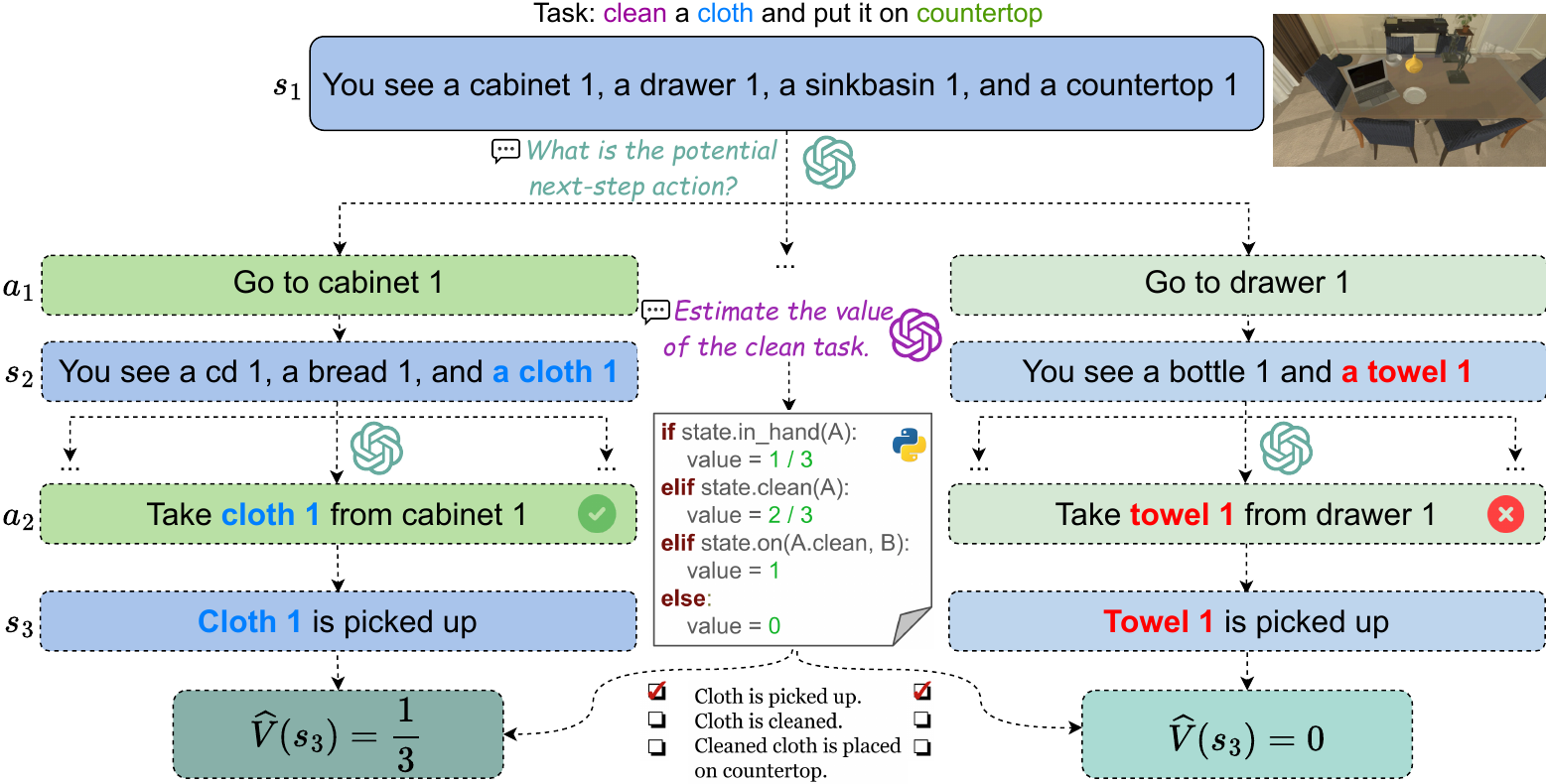} 
\caption{Demonstration of the $\mathtt{SLINVIT}$ algorithm in the ALFWorld environment when $N=2$ and the tree breadth of BFS is set to $k=3$. The task is to ``\texttt{clean a cloth and put it on countertop}". The hallucination that LLM faces, i.e., the towel should be taken (instead of cloth), is addressed by the inherent exploration mechanism in our RL framework.}
\label{fig_alf_demo}
\end{figure*}
In practice, we implement a breadth-$k$ Breadth First Search (BFS) to approximate the actions in \eqref{eq_sub_problem}.
Specifically, the following procedure is repeated $N$ times: making $k$ copies of the agent that execute the top-$k$ outputs of the LLM by querying it ``\texttt{What is the potential next-step action?}". This will generate $k^N$ lookahead action sequences and the one with the highest $Q^{\mathrm{LLM}}$ is selected as $a_{(i-1)N+1:iN}$ and executed in the environment. 

\subsection{Instantiations of Value Estimator}
\label{sec_ins_value}
In this section, we describe two instantiations of the value estimator $\hat{V}$ in \eqref{eq_sub_problem}, named rule-based and Monte-Carlo value estimators. An illustration is provided in Figure \ref{fig_value}.

\noindent{\bf Rule-Based Value Estimation.} The rule-based value estimator is designed for scenarios where achieving the goal $s_g$ requires fulfilling multiple preconditions, such as in ALFWorld and BlocksWorld. It outputs the ratio of preconditions currently met by the state $s$. To achieve this, we prompt the LLM to estimate the value of the task by generating Python functions based on the task description ahead of evaluation. To avoid uncontrollable mistakes of the LLM, its response undergoes a one-time human review. This step is necessary only once because, although $s_g$ is changing during evaluation (e.g., \texttt{put a cup on table} and \texttt{put a pen in drawer}), the nature of the task and the structure of the preconditions (e.g., \texttt{pick A} and \texttt{place A on B} for any \texttt{put} task) do not vary.

\begin{figure}[htbp]
    \centering
    \subfloat[Rule-based value estimator.]{\includegraphics[width=0.47\linewidth]{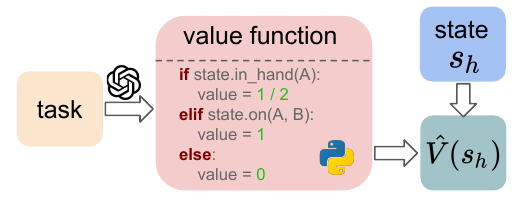}}
    \subfloat[Monte-Carlo value estimator.]{\includegraphics[width=0.47\linewidth]{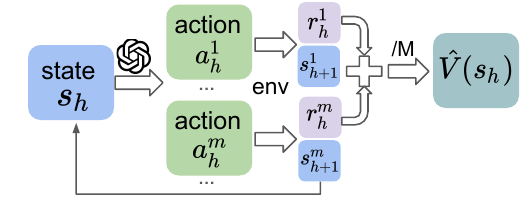}}
    \vspace{-0.2cm}
  \caption{Illustration of the proposed two instantiations of the value estimator.}
    \label{fig_value}
\end{figure}

For tasks where the goal's preconditions are hard to determine, we propose the more general Monte-Carlo estimator. 

\noindent{\bf Monte-Carlo Value Estimation.} At the state $s_h$ to be evaluated, by sampling actions from the LLM policy $\pi^{\mathrm{LLM}}(\cdot\mid s_h)$ until the planning horizon $H$ is reached, we obtain a partial trajectory. The Monte-Carlo value estimation is then given by averaging the cumulative reward received in $M$ such rollouts to approximate the value of the LLM policy, i.e.,
\$
\hat{V} = \frac{1}{M} \sum_{m=1}^{M/(H-h)}\sum_{n=h}^{H}r_h(s_n^m, a_n^m),
\$
where $a_n^m\sim \pi^{\mathrm{LLM}}_n(\cdot\mid s_n^m)$, and $s_{n+1}^m\sim P_n^*(\cdot\mid s_n^m, a_n^m)$.

\subsection{ALFWorld}
ALFWorld \citep{shridhar2020alfworld} is an interactive text-based environment with aligned embodied simulators. The benchmark encompasses $134$ virtual household task instances with predefined and fixed goals, each of which can be categorized into one of the six task types as shown in Table \ref{table_alfw}. 

We provide a visualization of how $\mathtt{SLINVIT}$ works in the ALFWorld environment in Figure \ref{fig_alf_demo}. Specifically, for each type of the six tasks, we use the rule-based value estimator to generate Python code that determines the preconditions of the task goal and the current state and outputs the portion of the satisfied preconditions as the estimated value. We set $N=2$ in our ALFWorld implementation.

\begin{table*}
    \centering
    \begin{tabularx}{\textwidth}{p{2.0cm}|p{1.8cm}p{1.6cm}p{1.6cm}p{1.6cm}p{1.6cm}p{1.6cm}|p{1.6cm}}
    \toprule
               & \hspace{0.2cm}Pick   & Clean & Heat   & Cool   & Examine & PickTwo & \hspace{0.3cm}Total \\ \midrule
    $\mathtt{BUTLER}$     & \hspace{0.2cm}46.00  & 39.00 & 74.00  & \textbf{100.00} & 22.00   & 24.00   &  \hspace{0.3cm}37.00 \\ 
    $\mathtt{ReAct}$      & \hspace{0.2cm}66.67  & 41.94 & 91.03  & 80.95  & 55.56   & 35.29   &  \hspace{0.3cm}61.94 \\ 
    $\mathtt{AdaPlanner}$ & \hspace{0.2cm}\textbf{100.00} & 96.77 & \textbf{95.65}  & \textbf{100.00} & \textbf{100.00}  & 47.06   &  \hspace{0.3cm}91.79 \\ 
    $\mathtt{Reflexion}$  & \hspace{0.2cm}\textbf{100.00} & 90.32 & 82.61  & 90.48  & \textbf{100.00}   & 94.12   &  \hspace{0.3cm}92.54 \\ 
    $\mathtt{SLINVIT}$       & \hspace{0.2cm}\textbf{100.00} & \textbf{100.00} & 91.30 & 90.48 & \textbf{100.00}  & \textbf{100.00}  &  \hspace{0.3cm}\textbf{97.01} \\ \bottomrule
    \end{tabularx}
    \caption{Success rate (\%) comparison of previous algorithms and $\mathtt{SLINVIT}$ in the ALFWorld environment.}
    \label{table_alfw}
    \end{table*}

\begin{figure}[htbp]
    \centering
    \includegraphics[width=0.5\linewidth]{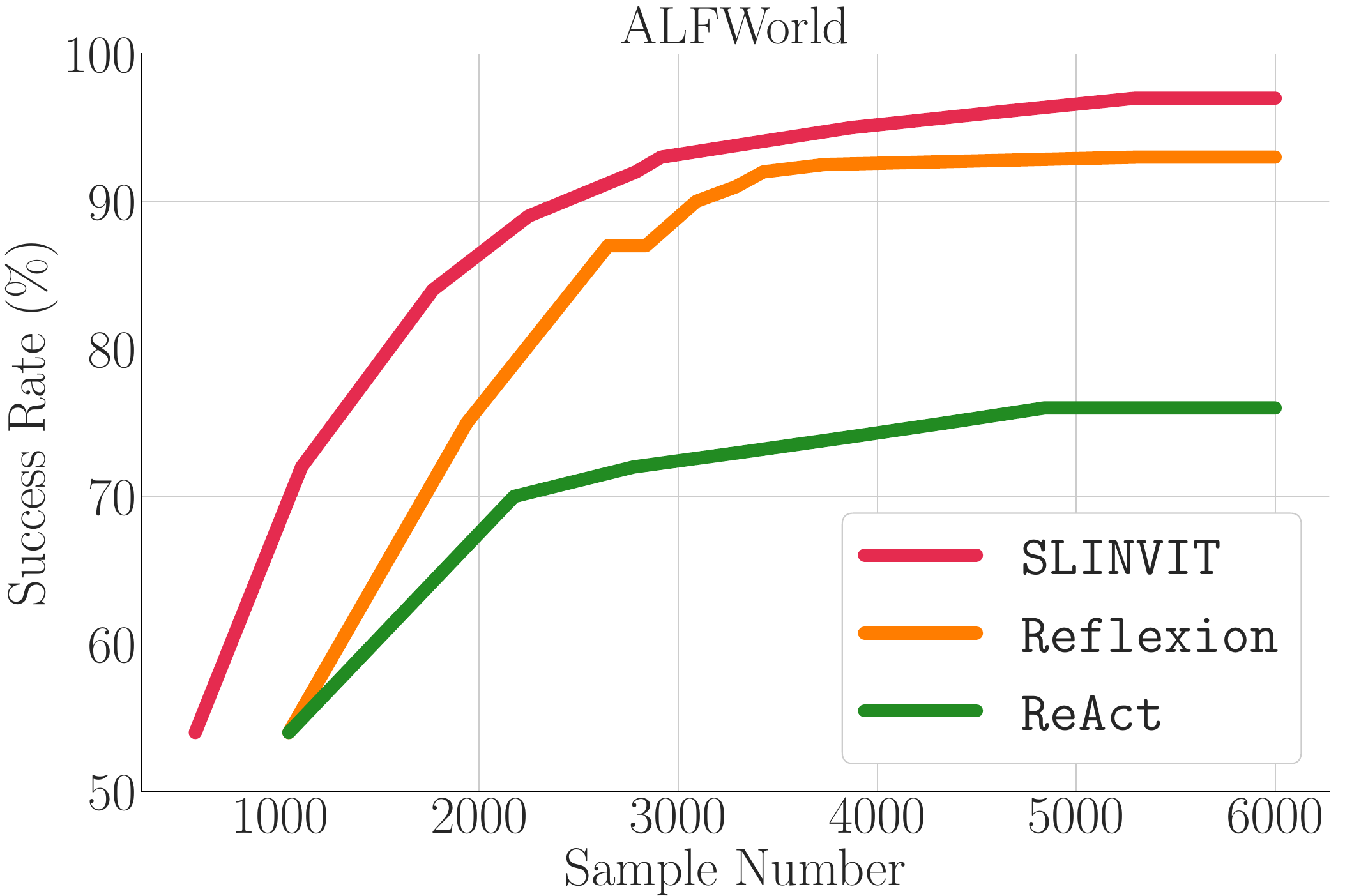}
\vspace{-0.2cm}
  \caption{Success rates with different numbers of samples.}
    \label{fig_alfw_curve}
\vspace{-0.4cm}
\end{figure}

We compare the success rate of $\mathtt{SLINVIT}$ and baselines in the ALFWorld environment in Table \ref{table_alfw}. We use GPT-3 (\texttt{text-davinci-003}) in our implementation.
For the baselines, $\mathtt{ReAct}$ \citep{yao2022react}, $\mathtt{AdaPlanner}$ \citep{sun2023adaplanner}, and $\mathtt{Reflexion}$ \citep{shinn2023reflexion} all use GPT-3 as their LLM agents, while $\mathtt{BUTLER}$ \citep{shridhar2020alfworld} is a RL-style imitation learning algorithm. Notably, the inferior performance of $\mathtt{BUTLER}$ indicates that RL algorithms may have difficulties understanding the task and generalize beyond. On the contrary, $\mathtt{SLINVIT}$ achieves the highest success rate in most categories of the task types and outperforms the baselines in terms of the overall success rate.

We also report the changes in success rate when the numbers of samples are different. The results are shown in Figure \ref{fig_alfw_curve}. The data points of $\mathtt{SLINVIT}$ are obtained by changing the tree breadth when performing BFS. Specifically, we set the tree breadth to $k=2, 3, \cdots, 10$ and report the overall success rate corresponding to different numbers of samples in all the $134$ tasks. For the $\mathtt{Reflexion}$ and $\mathtt{ReAct}$ baselines, the points in the plot are the results at the end of each trial. The sample number is calculated as the number of samples taken in the successful tasks in the current trial, plus all the samples in the previous trials. We observe that the proposed algorithm is able to achieve a higher success rate with fewer numbers of samples compared to methods that incorporate the environment feedback summary into the LLM as additional context.

\subsection{InterCode}
InterCode \citep{yang2023intercode} is an interactive coding environment with code as actions and execution feedback as observations. It provides two benchmarks to evaluate the planning abilities of the large language models, namely InterCode-SQL and InterCode-Bash which use SQL and Bash commands as action spaces, respectively. For each benchmark, there are hundreds of tasks with predefined goals, such as \texttt{find the name of airports which do not have any flight in and out} and \texttt{find all the text files in the testbed directory and subdirectories and concatenate them into a single file}.

Unlike the ALFWorld environment where the task goals can be described by preconditions, there is no straightforward way to directly measure the value of the current state with explicit and simple rules in the InterCode environment. Therefore, we implement $\mathtt{SLINVIT}$ using the Monte-Carlo value estimator. Besides, we use the original dense reward as proposed in \citep{yang2023intercode}. We set the sub-problem horizon $N=1$ and the Monte-Carlo sampling number $M=1$.
For our method and all the baselines, we use GPT-3.5 (\texttt{gpt-3.5-turbo}).

The success rates in the InterCode-SQL and InterCode-Bash environments are reported in Table \ref{table_sql} and \ref{table_bash}, respectively. The baselines we compare include $\mathtt{ReAct}$ \citep{yao2022react}, Plan \& Solve \citep{wang2023plan}, and Try Again \citep{yang2023intercode}, which is a vanilla LLM-based planning algorithm.
We observe that $\mathtt{SLINVIT}$ achieves the highest success rate in all the hardness modes of the InterCode-SQL benchmark, all the file systems of the InterCode-Bash benchmark, and has the best overall performance.

\begin{table}[htb]
\centering
\small
\begin{tabular}{l|cccc|c}
\toprule
           & \multicolumn{4}{c|}{InterCode-SQL Hardness} &   \\
           & Easy  & Medium & Hard & Extra  & Total \\ \midrule
$\mathtt{Try Again}$ & 75.81 & 48.65  & 49.43  & 21.69 & 50.97 \\ 
$\mathtt{Plan}$ \& Solve & 77.42 & 49.78 & 32.18 & 22.89 & 49.13 \\
$\mathtt{ReAct}$    & 80.24 & 65.47 &  47.13 & 20.48  & 58.70 \\ 
$\mathtt{SLINVIT}$      & \textbf{90.73}  & \textbf{71.08}  & \textbf{60.34}  & \textbf{50.00}  & \textbf{70.60} \\ \bottomrule
\end{tabular}
\caption{Success rate (\%) comparison in the InterCode-SQL environment.}
\label{table_sql}
\end{table}

\begin{table}[htb]
\centering
\small
\begin{tabular}{l|cccc|c}
\toprule
           & \multicolumn{4}{c|}{InterCode-Bash File System} &   \\
           & Sys 1   & Sys 2 & Sys 3   & Sys 4 & Total \\ \midrule
$\mathtt{Try Again}$ & 45.00 & 49.06 & 45.00  & 48.15 & 46.50 \\ 
$\mathtt{Plan \& Solve}$ & 0.00 & 45.28 & 43.33 & 22.22 & 28.00 \\
$\mathtt{ReAct}$    & 21.67 & 5.66 & 30.00 & 25.93  & 20.50 \\ 
$\mathtt{SLINVIT}$      & \textbf{55.00}  & \textbf{60.38}  & \textbf{64.41}  & \textbf{66.67}  & \textbf{60.80} \\ \bottomrule
\end{tabular}
\caption{Success rate (\%) comparison in the InterCode-Bash environment.}
\label{table_bash}
\end{table}

In Table \ref{table_sample_intercode}, we investigate the sample efficiency of the proposed algorithm in the InterCode environment. Specifically, we set the maximum number of samples in each episode to be $10$, $20$, and $30$ and report the corresponding success rates. For $\mathtt{SLINVIT}$, both the samples taken to maximize \eqref{eq_sub_problem} and the samples for Monte-Carlo value estimation are counted. The results indicate that $\mathtt{SLINVIT}$ consistently outperforms the baselines with the same sample size and is thus more sample-efficient. 

\begin{table}[htb]
\centering
\small
\begin{tabular}{l|ccc|ccc}
\toprule
 & \multicolumn{3}{c|}{SQL Sample Number} & \multicolumn{3}{c}{Bash Sample Number}   \\
& 10   & 20 & 30  & 10   & 20 & 30  \\ \midrule
$\mathtt{Try Again}$ & 48.45 & 50.97 & 50.97 & 34.67 & 40.20 & 50.25 \\ 
$\mathtt{ReAct}$    & 52.61 & 52.71 & 53.67 & 20.50 & 21.10 & 21.61  \\ 
$\mathtt{SLINVIT}$      & \textbf{52.80}  & \textbf{58.51}  & \textbf{64.02}  & \textbf{46.23}  & \textbf{50.25} & \textbf{54.27} \\ \bottomrule
\end{tabular}
\caption{Success rate (\%) under different maximum numbers of samples per episode.}
\label{table_sample_intercode}
\end{table}

\noindent{\bf Ablation.} We conduct an ablation study on the number of rollouts $M$ and the results are shown in Table \ref{table_sql_ablation_m}. We observe that a more accurate value estimation corresponding to a larger $M$ leads to higher success rates for harder problems. Therefore, a trade-off can be taken between the sample number and the performance. 
\begin{table}[htb]
\centering
\begin{tabular}{l|cccc|c}
\toprule
           & \multicolumn{4}{c|}{InterCode-SQL Hardness} &   \\
           & Easy  & Medium & Hard & Extra  & Total \\ \midrule
$M=1$ & \textbf{90.73}  & 71.08  & 60.34  & 50.00 & 70.60 \\
$M=2$ & 88.31  & \textbf{77.13}  & \textbf{65.52}  & \textbf{52.42}  & \textbf{73.89} \\ \bottomrule
\end{tabular}
\caption{Ablation study on $\mathtt{SLINVIT}$ with different $M$.}
\label{table_sql_ablation_m}
\end{table}

\subsection{BlocksWorld}
BlocksWorld \citep{valmeekam2023planning,liu2023llm+} is another planning benchmark that contains various tasks to arrange blocks in specific configurations. The state is the current configuration of the blocks and the action is an instruction that moves blocks. Specifically, an action is composed of one of the four verbs (\textsc{Stack}, \textsc{Unstack}, \textsc{Put}, and \textsc{Pickup}) and the operated block. Similar to the implementation in ALFWorld, we also adopt the rule-based value estimator. Specifically, the goal state of each task is defined as the combination of several block arrangements (e.g., \texttt{block 1 is on top of block 2}). With the current state as input, the value estimator then returns the proportion of the satisfied arrangements.

Following $\mathtt{RAP}$ \cite{hao2023reasoning}, we group the task instances in \cite{valmeekam2023planning} by the minimum number of actions required, leading to $57$ cases solvable within $4$ steps, and $114$ cases within $6$ steps. We evaluate our method and the $\mathtt{RAP}$ baseline by comparing the success rates under different numbers of samples. We evaluate the \texttt{Vicuna-13b(v1.3)} and the results are shown in Figure \ref{fig:blocksworld}. Our method consistently outperforms $\mathtt{RAP}$ and achieves a higher success rate with fewer samples.

\begin{figure}
    \centering
    \subfloat[4-step\label{subfig:step4-13b}]{%
\includegraphics[width=0.5\linewidth]{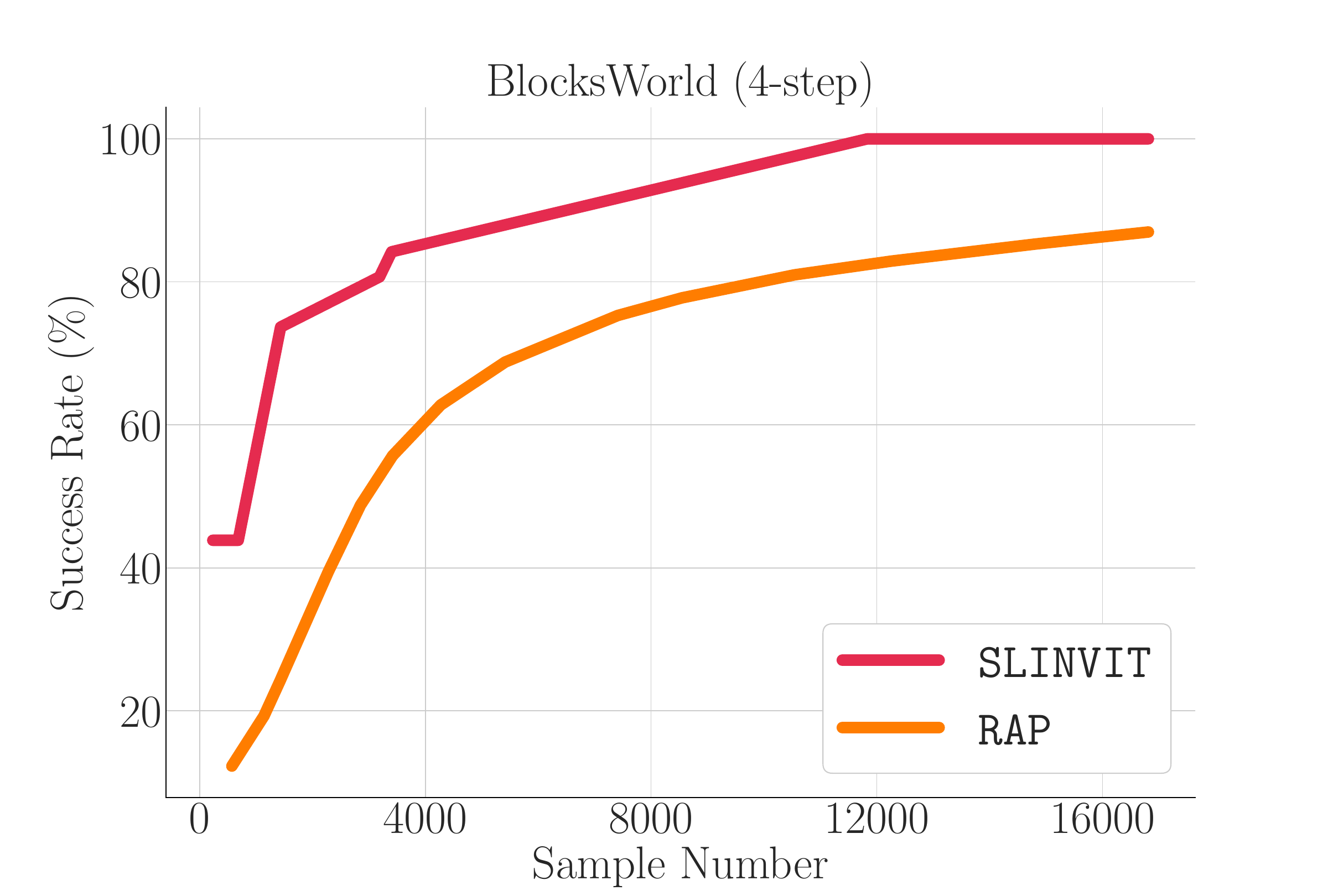}}
    \subfloat[6-step\label{subfig:step6-13b}]{%
        \includegraphics[width=0.5\linewidth]{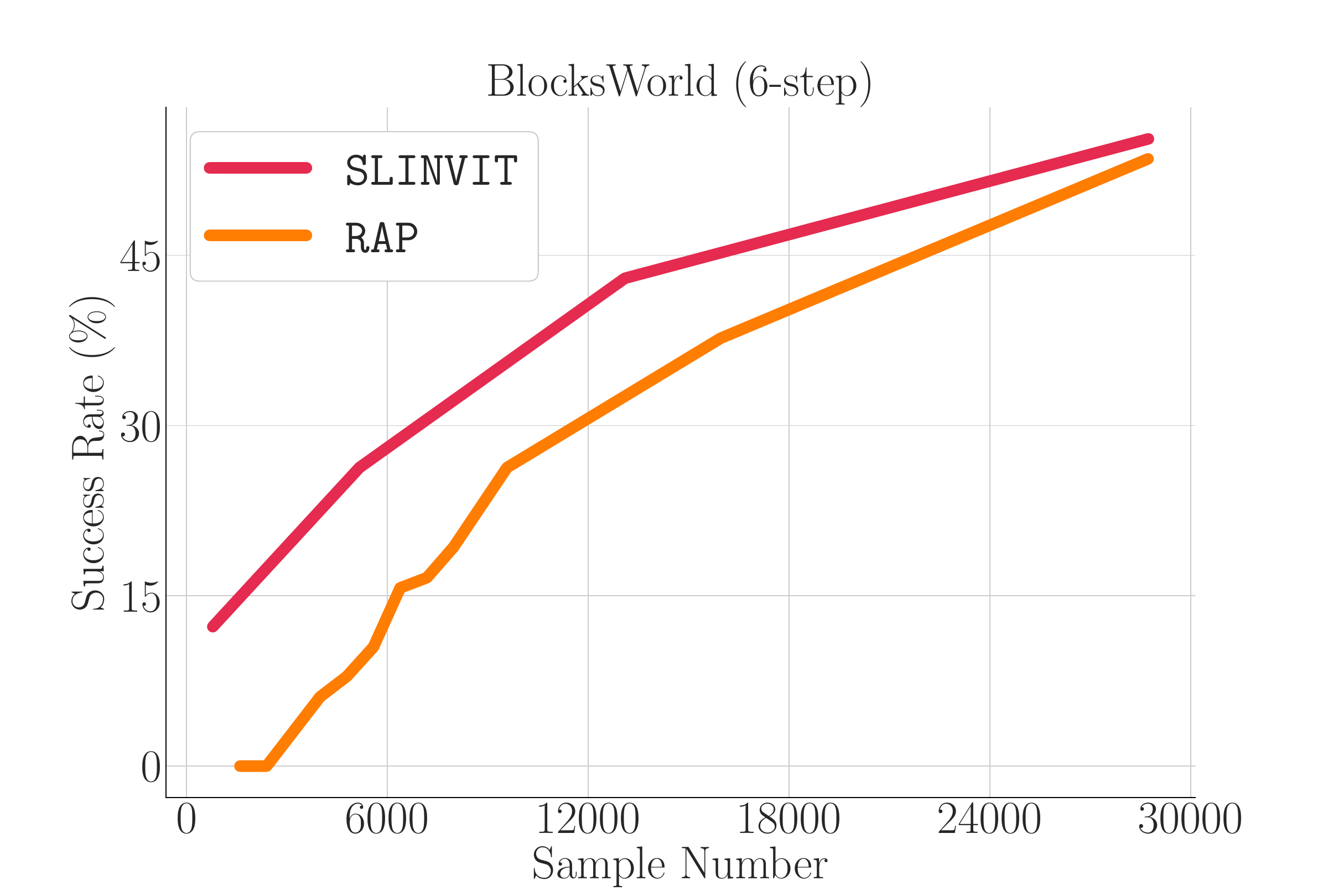}}
  \caption{Success rate (\%) of \texttt{$\mathtt{SLINVIT}$} and baselines on the 4-step and 6-step BlocksWorld tasks.}
    \label{fig:blocksworld}
\end{figure}

\section{Theory}
In this section, we present the analysis of $\mathtt{LINVIT}$. To begin, we define the KL divergence between two policies. 
\begin{definition}[KL-divergence between two policy]\label{def:kl_plc}
    For two policies $\pi_1 = \{\pi_1^h\}_{h=1}^H$ and $\pi_2 = \{\pi_2^h\}_{h=1}^H$, we define 
    \#
\KL(\pi^1\Vert \pi^2)=\sum_{h=1}^H\E_{\pi^1}\Bigl[\KL\bigl(\pi_h^1(\cdot|s_h)\Vert \pi_h^2(\cdot|s_h)\bigr)\Bigr].\nonumber
\#
\end{definition}
Definition \ref{def:kl_plc} provides a quantitative measure of the similarity between two policies. More specifically, the divergence is small when two policies are similar. Building on this foundational understanding, we present the following theorem.
    \begin{theorem}\label{thm:main}
        We assume that $\KL(\pi^*\Vert \pi^{\LLM})\leq \epsilon_{\LLM}$, and set the tuning parameter 
        \#
        \lambda = \epsilon/(2\epsilon_{\LLM}),\;\; T=CH^6SA^4\log^2(HSA/\delta)/\epsilon^2\nonumber
        \#
        for some absolute constant $C$. We then have $V_1^*(s_1)-V_1^{\hat{\pi}}(s_1)\leq \epsilon$ with probability as least $1-\delta$.
    \end{theorem}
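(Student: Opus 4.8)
The plan is to bound the suboptimality gap $V_1^*(s_1)-V_1^{\hat\pi}(s_1)$ by decomposing it through the lens of the \emph{regularized} MDP. Let $V_{h,\lambda}^\pi$ denote the value function of the KL-regularized MDP whose per-step reward is $r_h(s,a)-\lambda\KL(\pi_h(\cdot\mid s)\Vert\pi_h^{\LLM}(\cdot\mid s))$, and let $\pi_\lambda^*$ be its optimal policy. First I would insert and subtract the regularized quantities to write
\#
V_1^*(s_1)-V_1^{\hat\pi}(s_1)
=\underbrace{\bigl(V_1^*(s_1)-V_{1,\lambda}^{\pi_\lambda^*}(s_1)\bigr)}_{\text{(i) regularization bias}}
+\underbrace{\bigl(V_{1,\lambda}^{\pi_\lambda^*}(s_1)-V_{1,\lambda}^{\hat\pi}(s_1)\bigr)}_{\text{(ii) optimization/statistical error}}
+\underbrace{\bigl(V_{1,\lambda}^{\hat\pi}(s_1)-V_1^{\hat\pi}(s_1)\bigr)}_{\text{(iii) regularization overhead}}.\nonumber
\#
For term (i), because $\pi^*$ is feasible in the regularized problem, $V_{1,\lambda}^{\pi_\lambda^*}\ge V_{1,\lambda}^{\pi^*}\ge V_1^*-\lambda\KL(\pi^*\Vert\pi^{\LLM})\ge V_1^*-\lambda\epsilon_{\LLM}$, using Definition~\ref{def:kl_plc} and the assumption $\KL(\pi^*\Vert\pi^{\LLM})\le\epsilon_{\LLM}$. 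With the choice $\lambda=\epsilon/(2\epsilon_{\LLM})$ this contributes at most $\epsilon/2$. Term (iii) is nonpositive since the regularizer only subtracts a nonnegative KL penalty, so it can be dropped. Thus the entire burden falls on controlling term (ii), the suboptimality of the returned policy \emph{within} the regularized MDP.

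For term (ii) I would run the standard optimism-based regret argument, but adapted to the regularized value functions $\overline V_h^t,\underline V_h^t$ defined in \eqref{eq:optimistic_planning_reg}–\eqref{eq:pess_planning_reg}. The key structural facts to establish are: (a) a \emph{validity/optimism} statement, namely that with probability $1-\delta$ the Hoeffding-type bonus $u_h^t$ in \eqref{eq:hoeffding_transition_bonuses} makes $\overline V_h^t$ a valid upper bound and $\underline V_h^t$ a valid lower bound on the regularized value of the relevant policies, which follows from concentration of $P_h^t$ around $P_h^*$ together with the fact that the regularized Bellman operator $\max_{\pi\in\Delta_A}\{\langle\pi,Q\rangle-\lambda\KL(\pi\Vert\pi^{\LLM})\}$ is $1$-Lipschitz in $Q$ and monotone (so bonuses propagate cleanly); and (b) a \emph{regret-to-gap} bound showing that the per-episode regularized suboptimality of $\bar\pi^t$ is controlled by the realized sum of confidence widths $\overline Q_h^t-\underline Q_h^t$ along trajectories drawn from $\pi^t$. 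The exploration rule \eqref{eq:exp_pi}, which with probability $1/H$ plays the action of maximal width, is exactly what guarantees these widths get sampled and hence shrink; summing the widths via the standard $\sum_t 1/\sqrt{n_h^t(s,a)}\lesssim\sqrt{TSA}$ pigeonhole bound and applying a pigeonhole/regret-decomposition over the $T$ episodes yields, for $\hat\pi$ the uniform mixture of $\{\bar\pi^t\}$, a bound of order $\mathrm{poly}(H,S,A)\sqrt{\log(\cdot)/T}$ on term (ii). Setting this $\le\epsilon/2$ and solving for $T$ should reproduce the stated $T=CH^6SA^4\log^2(HSA/\delta)/\epsilon^2$.

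The main obstacle I anticipate is carrying the optimism/monotonicity argument through the \emph{regularized} Bellman recursion rather than the ordinary one, and in particular handling the error-propagation term $\sum_{s'}P_h^t(s'\mid s,a)\overline V_{h+1}^t(s')$ under only the Hoeffding (rather than Bernstein) bonus while still landing the $H$-dependence claimed. Two subtleties deserve care: first, verifying that replacing $\lambda$ by $\epsilon/(2\epsilon_{\LLM})$—which may be large when $\epsilon_{\LLM}$ is tiny—does not blow up the statistical term in (ii), since the regularized value is clipped to $[0,H]$ via $\clip$, the penalty's magnitude should not enter the width bound adversely; and second, translating the mixture-policy regret into a single-policy guarantee for $\hat\pi$, which relies on the linearity of $V_{1,\lambda}^\pi$ in the occupancy measure induced by the mixture. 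I expect the concentration step (a) and the summation of widths in step (b) to be the technically heaviest pieces, while the regularization decomposition (i)–(iii) is the conceptually clean part that makes the small-$\epsilon_{\LLM}$ regime pay off.
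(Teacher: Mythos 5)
Your proposal is correct in outline and its first stage coincides with the paper's: your terms (i) and (iii) are exactly Lemma \ref{lm:reg-con} (regularization bias bounded by $\lambda\epsilon_{\LLM}=\epsilon/2$, plus the observation that the KL penalty only lowers the value of $\hat{\pi}$), and your ``validity/optimism'' step (a) is exactly Lemma \ref{lm:confidence_int}. Where you genuinely diverge is in how term (ii), the regret inside the regularized MDP, is controlled. You run the standard \emph{first-order} optimism argument: optimism plus the cancellation of the KL terms (since $\bar{\pi}^t$ is the exact regularized argmax) reduces the per-episode gap to an on-policy sum of widths, which you then sum via the $\sum_t 1/\sqrt{n_h^t}\lesssim\sqrt{TSA}$ pigeonhole, giving a bound of order $\mathrm{poly}(H,S,A)\sqrt{\log(\cdot)/T}$ that is \emph{independent of} $\lambda$. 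The paper instead proves a \emph{second-order} bound: Lemma \ref{lm:opt-gap}, built on the smoothness of the KL-regularized Bellman operator (Lemma \ref{lm:continuity}, whose gradient is $(A/\lambda)$-Lipschitz), bounds the gap by $\frac{A}{2\lambda}\sum_h \E\bigl[\max_a(\overline{Q}_h^t-\underline{Q}_h^t)^2\bigr]$, and Lemma \ref{lm:tlsc} shows the summed \emph{squared} widths grow only polylogarithmically in $T$, yielding a fast $1/(T\lambda)$ rate. The trade-off is substantive: your route is more elementary, avoids the smoothness machinery, and does prove the theorem as literally stated (indeed with a truly absolute constant $C$, whereas the paper's final step actually requires $C\gtrsim\epsilon_{\LLM}$, so its $T$ is implicitly proportional to $\epsilon_{\LLM}$); but precisely because your term (ii) does not involve $\lambda$, your argument gives the worst-case $\mathrm{poly}/\epsilon^2$ sample complexity regardless of how good the LLM prior is, and therefore cannot deliver the paper's central message that the sample complexity shrinks in proportion to $\KL(\pi^*\Vert\pi^{\LLM})$ --- in your proof the choice $\lambda=\epsilon/(2\epsilon_{\LLM})$ matters only for the bias term. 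Relatedly, your worry that a large $\lambda$ (tiny $\epsilon_{\LLM}$) might inflate the statistical term resolves oppositely in the two proofs: in yours the KL terms cancel so $\lambda$ never appears, while in the paper's a large $\lambda$ actively \emph{helps} through the $A/(2\lambda)$ prefactor; and the max-width action played with probability $1/H$ in \eqref{eq:exp_pi}, which you correctly identify as ensuring widths get sampled, is truly needed only in the paper's argument, where a $\max_{a}$ over squared widths must be converted to an on-policy quantity.
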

\begin{proof}
    See Appendix \S\ref{appp:main} for a detailed proof.
\end{proof}
Theorem \ref{thm:main} demonstrates that the number of samples required to achieve $\epsilon$-optimality is proportional to the KL divergence, $\KL(\pi^*\Vert \pi^{\LLM})$, given that $\lambda$ is suitably chosen. This relationship implies a reduced sample necessity when $\pi^{\LLM}$ closely aligns with the optimal policy $\pi^*$. The intuitive rationale behind this is that the demand for exploration diminishes when an initial policy is nearly optimal. Consequently, this theorem underscores the efficacy of our algorithm in capitalizing on the policy information provided by the Large Language Model (LLM), thereby validating its practical utility in decision-making scenarios. Theorem \ref{thm:main} further demonstrates that Algorithm \ref{alg_lsvi} is capable of achieving $\epsilon$-optimality, even in cases where $\epsilon\leq \epsilon_{\LLM}$, contingent upon the collection of a sufficient number of samples. This finding underscores the algorithm's robustness in attaining a specified level of optimality.

In Theorem \ref{thm:main}, the regularization coefficient $\lambda$ became bigger as the KL divergence become smaller. This trend aligns intuitively with the principle of relying more heavily on the information provided by the Large Language Model (LLM) when there is evidence that the policy it offers is effective. Essentially, a smaller KL divergence indicates a closer alignment between the LLM's policy and the optimal policy, justifying increased reliance on the LLM's guidance in these scenarios.

\section{Related Work}
\noindent{\bf Reinforcement Learning with Language.} Language offers a particularly effective medium for tackling decision-making challenges due to its succinct and structured format. This quality has made it a valuable tool for numerous reinforcement learning (RL) algorithms \citep{sutton1999policy,mnih2013playing,zhang2022conservative,zhang2024model}, enabling them to learn from high-level specifications of goals \citep{jiang2019language,lynch2020language,hejna2023improving} or to benefit from the step-by-step instructions provided by large language models (LLMs) \citep{ahn2022can,huang2022language}. Additionally, research has ventured into harnessing more expansive language applications to model the dynamics and reward mechanisms of environments, employing planning algorithms to guide decision-making processes \citep{bialystok1978theoretical,liu2023reason}. Different from these approaches, our work introduces the novel concept of applying LLMs as regularizing agents within value-based RL frameworks.


\vskip4pt
\noindent{\bf Decision-Making with Language Models.} The strong capabilities language models exhibit have opened a new avenue for LLM agents to interact with the real world autonomously for decision-making tasks. Inspired by classical planning literature \citep{bonet2001planning,hoffmann2001ff,chitnis2016guided,gehring2022reinforcement} that uses heuristic functions as dense reward generators to perform informed search, recent works \citep{lin2023text2motion,hao2023reasoning} proposed to use the LLM as the heuristic function. The remarkable programming abilities exhibited by the LLM have also enabled converting natural language instructions into planning languages and then adopting the classical planner \citep{liu2023llm+,liang2023code,silver2023generalized,xie2023translating}, which, however, are constrained in narrowed domains and predefined environments. Moreover, a recent line of work \citep{yao2023tree,yao2023beyond,liu2023reason,sel2023algorithm,zhang2023cumulative} has developed various prompting schemes to enhance LLM reasoning, though these approaches generally do not integrate feedback from the environment into the decision-making process.

A large body of previous works focused on prompt engineering by providing the LLM with additional contextual information and templates to complete the task. Among them, the $\mathtt{ReAct}$ \citep{yao2022react} agent generates both reasoning traces and task-specific actions in an interleaved manner, Plan-and-Solve \citep{wang2023plan} improves the Chain-of-Thought \citep{wei2022chain} prompt to devise a fixed high-level plan before taking actions in the environment, and \cite{huang2022language} prompt the LLM to extract temporally extended plans in a zero-shot manner. Besides, other works directly train the model on embodied decision-making data \citep{suglia2021embodied,sharma2021skill,mezghani2023think,driess2023palm} or multi-modal data \citep{lu2019vilbert,li2019visualbert,radford2021learning,zellers2021merlot} by learning additional downstream networks on top of the pre-trained LLM \citep{lynch2020grounding,akakzia2020grounding,zellers2021piglet} or finetuning in the environment \citep{reid2022can,li2022pre,chen2023asking}. Similar to our work, \cite{ahn2022can,hu2023language,lin2023text2motion,hao2023reasoning} also use value functions to ground the LLM agent, but the sub-problem horizon is set to $1$ and the executed actions are one-step greedy without backtracking, which still suffers from the curse of the long horizon. Works building on the self-reflection abilities of LLMs \citep{shinn2023reflexion,sun2023adaplanner,ma2023eureka} also demonstrate limitations in refining initial strategies based on feedback, as shown in our experiments. 

\section{Conclusion}
Large language models (LLMs) have shown remarkable capabilities in quickly generating viable initial strategies for decision-making tasks, even with minimal or no prior examples. However, these LLM-driven agents struggle to iteratively refine their strategies based on feedback from their environment, mainly because they lack the ability to effectively explore and reason from the feedback. In contrast, reinforcement learning (RL) excels at adapting and improving through feedback, though it often requires an extensive amount of trial-and-error to gather improvable feedback, hindered by its inability to leverage common sense reasoning. To improve the sample efficiency of RL algorithms, in this work, we propose a novel RL framework with LLM as a policy prior. We prove that the number of samples required by our algorithm is proportional to the KL divergence between the LLM and the optimal policy. This result is further evidenced through experiments in interactive environments such as ALFWorld, InterCode, and BlocksWorld, underscoring our method's improved sample efficiency.
\bibliographystyle{ims}
\bibliography{reference}

\newpage
\onecolumn
\appendix

\section{Proof of Theorem \ref{thm:main}}\label{appp:main}
\begin{proof}
To analyze the property of $\algn$, we introduce the definition of the prior-regularized value function. The prior-regularized value function $Q_{\mathrm{LLM}, \lambda, h}^\pi$, $V_{\mathrm{LLM}, \lambda, h}^\pi(s_h)$ and $V_{\mathrm{LLM}, \lambda, h}^*$ are defined as
    \#
    Q_{\mathrm{LLM}, \lambda, h}^\pi(s_h, a_h)&=r_h(s_h, a_h)+\E_{s_{h+1\sim P_h}} \bigl[V_{\mathrm{LLM}, \lambda, h+1}^\pi(s_{h+1})\bigr], \nonumber\\
    V_{\mathrm{LLM}, \lambda, h}^\pi(s_h)&=\E_{a_h\sim\pi_h(\cdot| s_h)} Q_{\mathrm{LLM}, \lambda, h}^\pi(s_h,a_h)-\lambda \mathrm{KL}\bigl(\pi_h(\cdot|s_h) \| \pi^{\mathrm{LLM}}_h(\cdot|s_h)\bigr).\nonumber\\
    V_{\mathrm{LLM}, \lambda, h}^*(s_h)&=\max_{\pi} V_{\mathrm{LLM}, \lambda, h}^\pi(s_h).\nonumber
    \#
    $\algn$ can be viewed as an algorithm that maximizes the prior-regularized value function by interacting with the environment. The prior-regularized value function can be viewed as a regularized version of the original value functions that favors the policies that similar with $\pi^{\mathrm{LLM}}$. The following lemma connects the KL-regularized value with the original value.
\begin{lemma}\label{lm:reg-con}
    When $\KL(\pi^*\Vert \pi^{\LLM})\leq \epsilon_{\LLM}$, we have 
    \#
    V_1^*(s_1)-V_1^{\hat{\pi}}(s_1)\leq V^\star_{\LLM,\lambda,h}(s_1)-V^{\hat{\pi}}_{\LLM,\lambda,h}(s_1)+\lambda\epsilon_{\LLM}.\nonumber
    \#
    Here  $\KL(\pi^1\Vert \pi^2)=\sum_{h=1}^H\E_{\pi^1}[\KL(\pi_h^1(\cdot|s_h)\Vert \pi_h^2(\cdot|s_h))]$.
\end{lemma}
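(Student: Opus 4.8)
The plan is to reduce the whole lemma to a single \emph{value-decomposition identity} relating the unregularized value of any fixed policy to its prior-regularized value, the two differing exactly by the cumulative expected KL penalty along the trajectory. Concretely, I would first show that for every policy $\pi$ and every $h$,
\[
V_h^\pi(s_h) = V_{\LLM,\lambda,h}^\pi(s_h) + \lambda\sum_{h'=h}^H \E_\pi\Bigl[\KL\bigl(\pi_{h'}(\cdot|s_{h'})\,\Vert\,\pi^{\LLM}_{h'}(\cdot|s_{h'})\bigr)\,\Big|\,s_h\Bigr],
\]
where $V_h^\pi$ is the ordinary value of $\pi$ under the true dynamics $P^*$. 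Specializing to $h=1$ at the fixed initial state $s_1$ and invoking Definition \ref{def:kl_plc} yields the compact form $V_1^\pi(s_1) = V_{\LLM,\lambda,1}^\pi(s_1) + \lambda\,\KL(\pi\Vert\pi^{\LLM})$, which is the only structural fact the lemma actually needs.

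To establish the identity I would run a backward induction on $h$, starting from the convention $V_{H+1}^\pi = V_{\LLM,\lambda,H+1}^\pi = 0$ (so both the value and the empty KL sum vanish). In the inductive step, the two action-value functions $Q_h^\pi$ and $Q_{\LLM,\lambda,h}^\pi$ share the same reward $r_h$ and the same transition $P^*$; substituting the inductive hypothesis into the Bellman recursion for $Q_h^\pi$ therefore shows that $Q_h^\pi(s_h,a_h)$ equals $Q_{\LLM,\lambda,h}^\pi(s_h,a_h)$ plus the expected cumulative KL from step $h+1$ onward. Taking the expectation over $a_h\sim\pi_h(\cdot|s_h)$ and using the definition of $V_{\LLM,\lambda,h}^\pi$, which subtracts precisely the one-step term $\lambda\KL(\pi_h(\cdot|s_h)\Vert\pi^{\LLM}_h(\cdot|s_h))$, reassembles the full sum from step $h$ onward and closes the induction.

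With the identity in hand, I would apply it to both $\pi^*$ and $\hat\pi$, subtract, and use $V_1^*(s_1)=V_1^{\pi^*}(s_1)$ to write
\[
V_1^*(s_1)-V_1^{\hat\pi}(s_1) = \bigl(V_{\LLM,\lambda,1}^{\pi^*}(s_1)-V_{\LLM,\lambda,1}^{\hat\pi}(s_1)\bigr) + \lambda\KL(\pi^*\Vert\pi^{\LLM}) - \lambda\KL(\hat\pi\Vert\pi^{\LLM}).
\]
Three elementary bounds then finish the argument: $V_{\LLM,\lambda,1}^{\pi^*}(s_1)\le V_{\LLM,\lambda,1}^{*}(s_1)$ because $V^*_{\LLM,\lambda}$ is the maximum over all policies; $\KL(\hat\pi\Vert\pi^{\LLM})\ge 0$ by nonnegativity of KL, so discarding the $-\lambda\KL(\hat\pi\Vert\pi^{\LLM})$ term only enlarges the right-hand side; and $\KL(\pi^*\Vert\pi^{\LLM})\le\epsilon_{\LLM}$ by hypothesis. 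Combining these gives exactly the claimed bound (the subscript $h$ in the lemma statement should read $1$).

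I expect the only genuine work to be the bookkeeping in the induction, in particular confirming that $V^\pi$ and $V^\pi_{\LLM,\lambda}$ are evaluated under the \emph{same} true transition kernel $P^*$, so that the per-step discrepancy is purely the KL penalty with no transition-mismatch term to control. Everything after the identity is a two-line algebraic manipulation together with nonnegativity of KL and the optimality property of $V^*_{\LLM,\lambda}$, so I anticipate no substantive obstacle beyond the careful telescoping of the KL sum.
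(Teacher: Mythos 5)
Your proposal is correct and takes essentially the same route as the paper: the value-decomposition identity $V_1^\pi(s_1)=V^{\pi}_{\LLM,\lambda,1}(s_1)+\lambda\,\KL(\pi\Vert\pi^{\LLM})$ applied to $\pi^*$ and $\hat\pi$, combined with nonnegativity of the KL term for $\hat\pi$ and the hypothesis $\KL(\pi^*\Vert\pi^{\LLM})\leq\epsilon_{\LLM}$. If anything, you are more careful than the paper, which asserts the unneeded (and generally false) equality $V^{\pi^*}_{\LLM,\lambda,1}(s_1)=V^{\star}_{\LLM,\lambda,1}(s_1)$ where your one-sided bound $V^{\pi^*}_{\LLM,\lambda,1}(s_1)\leq V^{\star}_{\LLM,\lambda,1}(s_1)$ is exactly what is needed, and which leaves both the backward-induction proof of the identity and the step $V_1^{\hat\pi}(s_1)\geq V^{\hat\pi}_{\LLM,\lambda,1}(s_1)$ implicit.
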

\begin{proof}
    See \S\ref{appp:reg-con} for a detailed proof.
\end{proof}
The following lemma show that, our algorithm provably find the optimal policy with respect to the prior-regularized value function with high probability.
\begin{lemma}\label{lm:KL-reg-bound}
    With probability $1-\delta$, we have 
    \#
    V^\star_{\LLM,\lambda,h}(s_1)-V^{\hat{\pi}}_{\LLM,\lambda,h}(s_1)\leq \epsilon/2.\nonumber
    \#
\end{lemma}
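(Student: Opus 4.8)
I would prove Lemma \ref{lm:KL-reg-bound} as an optimism-based online-learning (regret) guarantee for $\algn$, viewed as an algorithm acting on the prior-regularized MDP, and then convert the cumulative regret into the stated per-episode bound for the mixture $\hat\pi$. The first reduction is to pass from $\hat\pi$ to the individual greedy policies $\bar\pi^t$. Since $\hat\pi$ is the uniform mixture of $\{\bar\pi^t\}_{t=1}^T$ and the regularizer $\pi\mapsto\KL(\pi\|\pi^{\LLM})$ is convex, the mixture construction gives $V^{\hat\pi}_{\LLM,\lambda,1}(s_1)\geq \tfrac1T\sum_{t=1}^T V^{\bar\pi^t}_{\LLM,\lambda,1}(s_1)$, so it suffices to show $\tfrac1T\sum_{t=1}^T\bigl(V^\star_{\LLM,\lambda,1}(s_1)-V^{\bar\pi^t}_{\LLM,\lambda,1}(s_1)\bigr)\leq\epsilon/2$ on an event of probability $1-\delta$.

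Second, I would establish optimism: on a $1-\delta$ event, $\overline V_h^t(s)\geq V^\star_{\LLM,\lambda,h}(s)$ for all $t,h,s$. This follows by backward induction on $h$. Hoeffding's inequality together with a union bound over $(h,t,s,a,s')$—which is exactly what the factor $\log(4HTS^2A/\delta)$ in $u_h^t$ is calibrated for—guarantees that $u_h^t(s,a)$ dominates the one-step model error evaluated against the fixed reference value, i.e. $P_h^t V^\star_{h+1}\geq P_h^* V^\star_{h+1}-u_h^t$; monotonicity in $Q$ of the regularized Bellman update $\pi\mapsto\sum_a\pi(a|s)Q(s,a)-\lambda\KL(\pi\|\pi^{\LLM})$, the inductive hypothesis, and the $\clip$ operator then propagate the inequality. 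Consequently the per-episode regret obeys $V^\star_{\LLM,\lambda,1}(s_1)-V^{\bar\pi^t}_{\LLM,\lambda,1}(s_1)\leq \overline V_1^t(s_1)-V^{\bar\pi^t}_{\LLM,\lambda,1}(s_1)$.

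Third, I would apply a regularized performance-difference (simulation) lemma to the right-hand side. The crucial simplification is that $\overline V^t$ and $V^{\bar\pi^t}_{\LLM,\lambda}$ evaluate the \emph{same} policy $\bar\pi^t$, so the $-\lambda\KL(\bar\pi_h^t\|\pi^{\LLM}_h)$ terms cancel exactly at every level and only the reward/bonus parts survive. Telescoping yields $\overline V_1^t(s_1)-V^{\bar\pi^t}_{\LLM,\lambda,1}(s_1)\leq \E_{\bar\pi^t}\bigl[\sum_{h=1}^H\bigl(u_h^t(s_h,a_h)+(P_h^t-P_h^*)\overline V_{h+1}^t(s_h,a_h)\bigr)\bigr]$, and on the optimism event the model-error term is absorbed into a second copy of the bonus (handling the data-dependence of $\overline V^t$ by a standard covering/reference-value argument), giving a clean bound by $2\,\E_{\bar\pi^t}[\sum_h u_h^t]$.

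Finally, I would convert this expected cumulative bonus into realized visitation counts and sum over $t$. Here the design of the sampling policy $\pi^t$ in \eqref{eq:exp_pi} is essential: since $\pi^t$ follows $\bar\pi^t$ at each step with probability $1-1/H$, a change-of-measure argument shows that the occupancy of $\pi^t$ dominates a constant multiple of that of $\bar\pi^t$ at every level, so $\E_{\bar\pi^t}[\sum_h u_h^t]$ is controlled by $\E_{\pi^t}[\sum_h u_h^t]$; meanwhile the extra $1/H$ mass on $\argmax_a(\overline Q_h^t-\underline Q_h^t)$ (a gap bounded by $2u_h^t$) forces the most uncertain pairs to be visited so their counts $n_h^t$ grow. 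An Azuma–Hoeffding bound replaces $\sum_t\E_{\pi^t}[\sum_h u_h^t]$ by the realized sum $\sum_{t,h}u_h^t(s_h^t,a_h^t)$, and a pigeonhole/potential argument over $\sum_t 1/\sqrt{n_h^t}$ bounds the cumulative regret by $\widetilde O(\mathrm{poly}(H,S,A)\sqrt T)$, with the clipping of values to $[0,H]$ controlling the low-count regime. Dividing by $T$ and substituting $T=CH^6SA^4\log^2(HSA/\delta)/\epsilon^2$ yields the claimed $\epsilon/2$. I expect this last transfer to be the main obstacle: the concentration steps are routine, but carefully propagating the bonus through the softmax-type regularized value iteration and tracking how the $1/H$ exploration factor interacts with the change of measure from $\bar\pi^t$ to $\pi^t$ and with the KL regularizer is what fixes the polynomial dependence on $H$ and $A$, and hence whether the bound holds under precisely the stated parameters.
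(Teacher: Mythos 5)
Your proposal is sound and would establish the lemma as literally stated, but it takes a genuinely different route from the paper. You run a first-order, UCBVI-style analysis: optimism $\overline{V}^t_h \geq V^\star_{\LLM,\lambda,h}$, a simulation lemma in which the $\lambda\KL$ terms cancel because $\overline{V}^t$ and $V^{\bar{\pi}^t}_{\LLM,\lambda}$ evaluate the same policy, and then a pigeonhole bound on the realized bonus sum, giving cumulative regret $\widetilde{O}(\mathrm{poly}(H,S,A)\sqrt{T})$ and average regret $\widetilde{O}(\mathrm{poly}/\sqrt{T})$, which fits under the stated $T = CH^6SA^4\log^2(HSA/\delta)/\epsilon^2$ for $C$ large. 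The paper instead proves a \emph{second-order} bound: Lemma \ref{lm:opt-gap} controls the per-episode gap by $\frac{A}{2\lambda}\sum_h \E_{\bar{\pi}^{t+1}}[\max_a(\overline{Q}^t_h-\underline{Q}^t_h)^2]$, using the smoothness of the KL-regularized (log-sum-exp) Bellman operator (Lemma \ref{lm:continuity}) together with two-sided confidence bounds $\underline{Q}\leq Q^\star_{\LLM,\lambda}\leq\overline{Q}$ (Lemma \ref{lm:confidence_int}); since the \emph{squared} widths sum to only $\widetilde{O}(SA^3H^6)$ over all $T$ episodes (Lemma \ref{lm:tlsc}), the average regret decays at the fast rate $\widetilde{O}(SA^4H^6/(T\lambda))$ rather than $1/\sqrt{T}$. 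This difference matters: the $1/\lambda$ factor is precisely what, after setting $\lambda=\epsilon/(2\epsilon_{\LLM})$, makes the required $T$ proportional to $\epsilon_{\LLM}=\KL(\pi^*\Vert\pi^{\LLM})$ — the headline claim of Theorem \ref{thm:main}. Your bound carries no $\lambda$-dependence, so in the regime where $T$ is scaled down with small $\epsilon_{\LLM}$ (the regime the theorem is advertising), your $1/\sqrt{T}$ rate would no longer deliver $\epsilon/2$; it only succeeds because the theorem's displayed $T$ is stated without the $\epsilon_{\LLM}$ factor. Note also that in your approach the pessimistic values $\underline{Q}$ and the $1/H$ exploration mass on $\argmax_a(\overline{Q}^t_h-\underline{Q}^t_h)$ in \eqref{eq:exp_pi} play no real role (you only need the $(1-1/H)$ change-of-measure factor), whereas in the paper's proof they are essential for converting $\max_a$ of the squared gap under $\bar{\pi}^{t+1}$ into realized visitations under $\pi^{t+1}$. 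In short: your argument is more elementary and correct for the lemma as written, but it forfeits the mechanism by which LLM guidance provably reduces sample complexity, which is the structural content the paper's proof is built to capture.
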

\begin{proof}
    See \S\ref{appp:KL-reg-bound} for a detailed proof.
\end{proof}
We denote by $\cE_1$ the event in Lemma \ref{lm:KL-reg-bound}. We then have $P(\cE_1)\geq 1-\delta$. Since we set $\lambda=\epsilon/(\epsilon_{\LLM})$, we have 
\#
V_1^*(s_1)-V_1^{\hat{\pi}}(s_1)\leq V^\star_{\LLM,\lambda,h}(s_1)-V^{\hat{\pi}}_{\LLM,\lambda,h}(s_1)+\epsilon/2\leq \epsilon\nonumber
\#
by Lemma \ref{lm:reg-con} when we condition on Event $\cE_1$. Therefore, we conclude the proof of Theorem \ref{thm:main}.

\end{proof}

\section{Proof of Auxiliary Lemmas}

\subsection{Proof of Lemma \ref{lm:reg-con}}\label{appp:reg-con}
\begin{proof}
By the definitions of $V^{\pi^*}_{\pi^{\mathrm{LLM}},\lambda,h}$ and $V^\star_{\pi^{\mathrm{LLM}},\lambda,h}(s_1)$, we have 
\#
V_1^*(s_1)&=V^{\pi^*}_{\pi^{\mathrm{LLM}},\lambda,h}(s_1)+\lambda \KL(\pi^*,\pi^{\LLM})\nonumber\\
&=V^\star_{\pi^{\mathrm{LLM}},\lambda,h}(s_1)+\lambda \KL(\pi^*,\pi^{\LLM}).\nonumber
\# 
Therefore, when $\KL(\pi^*,\pi^{\LLM})\leq \epsilon_{\LLM}$, we have 
\#
V_1^*(s_1)-V_1^{\hat{\pi}}(s_1) &\leq V^\star_{\pi^{\mathrm{LLM}},\lambda,h}(s_1)-V^{\hat{\pi}}_{\pi^{\mathrm{LLM}},\lambda,h}(s_1)+\lambda \KL(\pi^*,\pi^{\LLM})\nonumber\\
&\leq V^\star_{\pi^{\mathrm{LLM}},\lambda,h}(s_1)-V^{\hat{\pi}}_{\pi^{\mathrm{LLM}},\lambda,h}(s_1)+\lambda \epsilon_{\LLM},\nonumber
\#
which concludes the proof of Lemma \ref{lm:reg-con}.
\end{proof}
\subsection{Proof of Lemma \ref{lm:KL-reg-bound}}\label{appp:KL-reg-bound}
\begin{proof}
    The proof of Lemma \ref{lm:KL-reg-bound} consists of two parts. We decompose the regret in the first part, and upper bound the decomposed regret in the second part.

    By the definition of $\hat{\pi}$ in Algorithm $\algn$, we have 
    \#
    V^\star_{\pi^{\mathrm{LLM}},\lambda,1}(s) - V^{\overline{\pi}^{t+1}}_{\pi^{\mathrm{LLM}},\lambda,1}(s)=\frac{1}{T}\sum_{t=1}^T\bigl[V^\star_{\pi^{\mathrm{LLM}},\lambda,1}(s) - V^{\overline{\pi}^{t+1}}_{\pi^{\mathrm{LLM}},\lambda,1}(s) \bigr].
    \#
    We also have the following lemma.

\begin{lemma}\label{lm:opt-gap}
    We have 
    \#
        V^\star_{\pi^{\mathrm{LLM}},\lambda,1}(s) - V^{\overline{\pi}^{t+1}}_{\pi^{\mathrm{LLM}},\lambda,1}(s) \leq \frac{A}{2\lambda}\sum_{h=1}^H\E_{\bar{\pi}^{t+1}}\Bigl[\max_{a\in\cA}\bigl(\overline{Q}_h^t(s_h,a)-\underline{Q}_h^t(s_h,a)\bigr)^2\Bigr]\nonumber
    \#
    holds for all $(t,s)\in[T]\times \cS$ with probability $1-\delta/2$.
\end{lemma}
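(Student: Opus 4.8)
The plan is to bound the regularized optimality gap by a telescoping argument that, at each step, isolates a one-step \emph{soft-greedy} error and controls it quadratically using the smoothness of the KL-regularized value in its action-value argument. Write $\delta_h(s) = V^\star_{\LLM,\lambda,h}(s) - V^{\overline\pi^{t+1}}_{\LLM,\lambda,h}(s)$, and recall that for any $Q$ the regularized one-step value $\rho\mapsto \langle \rho, Q(s,\cdot)\rangle - \lambda\KL(\rho\Vert\pi^{\mathrm{LLM}}_h(\cdot|s))$ is maximized by the soft policy $\rho(a)\propto \pi^{\mathrm{LLM}}_h(a|s)\exp(Q(s,a)/\lambda)$, with optimal value $\mathcal V_\lambda[Q](s)=\lambda\log\sum_a \pi^{\mathrm{LLM}}_h(a|s)\exp(Q(s,a)/\lambda)$. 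First I would establish the recursion $\delta_h(s) \le \big(\mathcal V_\lambda[Q^\star_h](s) - g^\star_h(\overline\pi^{t+1}_h)\big) + \E_{a\sim\overline\pi^{t+1}_h,\,s'\sim P^*_h}[\delta_{h+1}(s')]$, where $g^\star_h(\rho)=\langle\rho,Q^\star_h(s,\cdot)\rangle-\lambda\KL(\rho\Vert\pi^{\mathrm{LLM}}_h)$ and $Q^\star_h$ is the true regularized optimal action-value; this is the regularized performance-difference identity, obtained by adding and subtracting $g^\star_h(\overline\pi^{t+1}_h)$ and using $Q^\star_h-Q^{\overline\pi^{t+1}}_h=\E_{s'}[V^\star_{h+1}-V^{\overline\pi^{t+1}}_{h+1}]$.

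The heart of the argument is the per-step term. Since $\overline\pi^{t+1}_h$ is the soft-greedy policy for $\overline Q^t_h$ rather than for $Q^\star_h$, the gap $\mathcal V_\lambda[Q^\star_h]-g^\star_h(\overline\pi^{t+1}_h)$ equals $\mathcal V_\lambda[Q^\star_h](s)-\mathcal V_\lambda[\overline Q^t_h](s)+\langle\overline\pi^{t+1}_h,\overline Q^t_h-Q^\star_h\rangle$, because $\overline\pi^{t+1}_h$ attains $\mathcal V_\lambda[\overline Q^t_h]$. The map $Q\mapsto\mathcal V_\lambda[Q]$ is convex with gradient equal to the induced soft policy and Hessian $\tfrac1\lambda(\mathrm{diag}(\rho)-\rho\rho^\top)$, hence $\tfrac1\lambda$-smooth; a second-order Taylor expansion around $\overline Q^t_h$ (whose gradient is $\overline\pi^{t+1}_h$) bounds this gap by $\tfrac{1}{2\lambda}$ times a variance-type quadratic form in $\overline Q^t_h-Q^\star_h$. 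Converting that quadratic form to the coordinate-wise maximum is where the factor $A$ enters, giving the per-step bound $\tfrac{A}{2\lambda}\max_{a}(\overline Q^t_h(s,a)-Q^\star_h(s,a))^2$. The $1/\lambda$ factor is the signature of regularization: stronger regularization makes the soft-greedy map more stable, so estimation error in $Q$ produces only quadratically small value loss.

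Next I would replace $\overline Q^t_h-Q^\star_h$ by the observable uncertainty $\overline Q^t_h-\underline Q^t_h$. On the concentration event of probability $1-\delta/2$ — on which the Hoeffding bonus $u^t_h$ dominates the deviation $|\sum_{s'}(P^t_h-P^*_h)(s'|s,a)V_{h+1}(s')|$ for all $(h,s,a,t)$ — an induction over $h$ establishes the bracketing $\underline Q^t_h(s,a)\le Q^\star_h(s,a)\le \overline Q^t_h(s,a)$ (optimism and pessimism, using monotonicity and non-expansiveness of $\clip$ together with the induction hypothesis on the continuation values). This yields $0\le \overline Q^t_h-Q^\star_h\le \overline Q^t_h-\underline Q^t_h$ pointwise, so the per-step bound is at most $\tfrac{A}{2\lambda}\max_a(\overline Q^t_h-\underline Q^t_h)^2$. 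Telescoping the recursion from $h=1$ with $\delta_{H+1}\equiv0$, and noting that the nested propagation expectations compose into a single expectation $\E_{\overline\pi^{t+1}}$ over trajectories generated by $\overline\pi^{t+1}$ under the true dynamics, produces exactly $\tfrac{A}{2\lambda}\sum_{h=1}^H\E_{\overline\pi^{t+1}}[\max_{a}(\overline Q^t_h(s_h,a)-\underline Q^t_h(s_h,a))^2]$, which is the claim.

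The main obstacle I anticipate is the interaction between the per-step smoothness bound and the value propagation: proving the bracketing $\underline Q^t_h\le Q^\star_h\le\overline Q^t_h$ so that the \emph{propagated} optimism gap $\overline V^t_{h+1}-V^\star_{h+1}$ does not leak into the per-step perturbation and inflate it beyond $\overline Q^t_h-\underline Q^t_h$. This forces care in how $\overline V^t$ and $\underline V^t$ carry their continuation values (the pessimistic value must propagate a pessimistic continuation) and in the inductive ordering of the optimism/pessimism and smoothness steps; the clipping operator and the boundary terms at $h=H$ also have to be checked so that $Q^\star_h\in[0,H]$ keeps the bracketing intact. Pinning down the exact constant $A/(2\lambda)$ — as opposed to the cruder $2A/\lambda$ that a primal strong-convexity (Pinsker) argument yields — is a secondary bookkeeping matter resolved by the dual-smoothness route above.
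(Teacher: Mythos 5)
Your proposal is correct and follows essentially the same route as the paper's proof: the same recursion exploiting that $\bar{\pi}^{t+1}$ is the soft-greedy (log-sum-exp maximizing) policy for $\overline{Q}^t_h$ so the first-order term reduces to a performance-difference that propagates to step $h+1$, the same quadratic per-step bound $\frac{A}{2\lambda}\max_a|\overline{Q}^t_h(s,a)-Q^\star_{\pi^{\mathrm{LLM}},\lambda,h}(s,a)|^2$ from smoothness of the KL-regularized maximum, the same optimism/pessimism bracketing $\underline{Q}^t_h \leq Q^\star_{\pi^{\mathrm{LLM}},\lambda,h} \leq \overline{Q}^t_h$ on the probability-$(1-\delta/2)$ event, and the same telescoping over $h$. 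The only cosmetic difference is that you obtain the smoothness constant from the Hessian of the log-sum-exp (which in fact gives $\frac{1}{2\lambda}$, so the factor $A$ is not even needed there), whereas the paper derives it as $\frac{A}{2\lambda}$ from an $\ell_1$/$\ell_\infty$ gradient-Lipschitz bound rooted in strong convexity of the KL term.
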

\begin{proof}
    See \S\ref{appp:opt-gap} for a detailed proof.
\end{proof}
Therefore, we have 
\#
V^\star_{\pi^{\mathrm{LLM}},\lambda,1}(s) - V^{\overline{\pi}^{t+1}}_{\pi^{\mathrm{LLM}},\lambda,1}(s)\leq \frac{A}{2T\lambda}\sum_{t=1}^T\sum_{h=1}^H\E_{\bar{\pi}^{t+1}}\Bigl[\max_{a\in\cA}\bigl(\overline{Q}_h^t(s_h,a)-\underline{Q}_h^t(s_h,a)\bigr)^2\Bigr]
\#

The following lemma upper bounds the decomposed regret.
\begin{lemma}\label{lm:tlsc}
    We have 
    \#
    \sum_{t=1}^T\sum_{h=1}^H\E_{\overline{\pi}^{t+1}}\left[ \max_{a\in \cA} \left( \overline{Q}^t_{h}(s_h,a) - \underline{Q}^t_{h}(s_h,a) \right)^2  \right]\leq 920SA^3H^6\log^2(12HTS^2A/\delta)\nonumber
    \#
    holds with probability at least $1-\delta/4$.
\end{lemma}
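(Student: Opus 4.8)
The plan is to reduce each summand $\E_{\bar\pi^{t+1}}\bigl[\max_{a}(\overline{Q}^t_h(s_h,a)-\underline{Q}^t_h(s_h,a))^2\bigr]$ to the exploration bonus $u^t_h$ and then aggregate the bonuses across episodes by a potential (pigeonhole) argument. First I would bound the single-step gap. Because $\overline{Q}^t_h$ and $\underline{Q}^t_h$ are both $\clip$'s of expressions differing only in the sign of $u^t_h$, and $\clip$ is $1$-Lipschitz, the gap at a fixed $(s,a)$ is at most $2u^t_h(s,a)$ plus the propagated value gap $\sum_{s'}P^t_h(s'\mid s,a)(\overline{V}^t_{h+1}(s')-\underline{V}^t_{h+1}(s'))$, and is always at most $H$. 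Using that the regularized operator $Q\mapsto\max_{\pi\in\Delta_A}\{\langle\pi,Q\rangle-\lambda\KL(\pi\Vert\pi^{\LLM}_h)\}$ is $1$-Lipschitz in $\|\cdot\|_\infty$ (its gradient is a probability vector) gives $\overline{V}^t_{h+1}(s')-\underline{V}^t_{h+1}(s')\le\max_a(\overline{Q}^t_{h+1}-\underline{Q}^t_{h+1})(s',a)$, so unrolling expresses $\max_a(\overline{Q}^t_h-\underline{Q}^t_h)$ as a horizon sum of future bonuses along a rollout under the estimated model $P^t$. Squaring and applying Cauchy--Schwarz across the horizon turns this into $O(H)$ times a sum of squared bonuses $(u^t_{h'})^2$, each equal to $\min\{4H^2,\,L/n^t_{h'}\}$ with $L=\log(4HTS^2A/\delta)$.

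The key step, and the one I expect to be the main obstacle, is the change of measure: the summand is an expectation under the exploitation policy $\bar\pi^{t+1}$ (with the inner rollout under the estimated transition $P^t$), whereas the counts $n^t_h$ that make the bonus small are generated by the data-collection policy $\pi^{t+1}$ under the true dynamics. Here I would exploit the deliberate design of the sampling policy in \eqref{eq:exp_pi}. Two facts do the work: (i) following the $\bar\pi^{t+1}$-branch of $\pi^{t+1}$ at every step has probability $(1-1/H)^{h-1}\ge 1/e$, so the visitation distributions satisfy $d^{\bar\pi^{t+1}}_h(s)\le e\,d^{\pi^{t+1}}_h(s)$; and (ii) $\pi^{t+1}$ places mass at least $1/H$ on $\argmax_a(\overline{Q}^t_h-\underline{Q}^t_h)$ at every state, which converts the inner $\max_a$ into an on-policy sampled action at the cost of a factor $H$. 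Combining (i) and (ii) yields a bound of the form $\E_{\bar\pi^{t+1}}[\max_a(\cdots)^2]\le eH\,\E_{\pi^{t+1}}[(\overline{Q}^t_h-\underline{Q}^t_h)^2(s_h,a_h)]$, now an on-policy quantity. Reconciling the estimated-model rollout with the true-model visitation (again controlled by $u^t_h$, which by construction dominates the Hoeffding model-estimation error on a good event) is the delicate part and is what forces these repeated horizon factors.

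Next I would pass from expected to realized bonuses. Summing over $t$ and $h$, the differences between $\E_{\pi^{t+1}}[(\overline{Q}^t_h-\underline{Q}^t_h)^2(s_h,a_h)\mid\mathcal{F}_t]$ and the realized values $(\overline{Q}^t_h-\underline{Q}^t_h)^2(s^{t+1}_h,a^{t+1}_h)$ form a martingale-difference sequence of length $HT$ with increments bounded by $H^2$; an Azuma--Hoeffding bound controls this fluctuation with probability $1-\delta/4$ and contributes a lower-order term. For the realized bonuses I would apply the pigeonhole/potential argument: fixing a triple $(h,s,a)$ and letting its visits occur at $n^t_h(s,a)=0,1,2,\dots$, the count-zero (first) visit is capped at $4H^2$ and the remaining visits contribute $L\sum_{n\ge 1}1/n\le L(1+\log T)$, so $\sum_{t}(u^t_h(s^{t+1}_h,a^{t+1}_h))^2\le 4H^2+L(1+\log T)$ for each $(h,s,a)$. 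Summing over the $HSA$ triples gives the $SA\cdot\mathrm{polylog}$ core.

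Finally I would collect constants. The pigeonhole supplies the $SA\log^2(\cdot)$ factor; the squaring, the horizon double-sum, the $1/H$-exploration conversion, and the estimated-versus-true-model reconciliation each cost polynomial powers of $H$ and $A$, which compound into the stated prefactor $SA^3H^6$. A union bound over the per-episode model-estimation good events (built into $u^t_h$) and the single Azuma event, each allotted a share of $\delta$, then yields $\sum_{t,h}\E_{\bar\pi^{t+1}}[\max_a(\overline{Q}^t_h-\underline{Q}^t_h)^2]\le 920\,SA^3H^6\log^2(12HTS^2A/\delta)$ with probability at least $1-\delta/4$, as claimed. I expect the accounting of these compounded factors, rather than any single inequality, to be where most of the care is needed.
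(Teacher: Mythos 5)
Your first half follows the paper's proof almost line for line: the paper uses exactly your two facts about the sampling policy \eqref{eq:exp_pi} to obtain $\E_{\overline{\pi}^{t+1}}\bigl[\max_{a\in\cA}(\overline{Q}^t_h(s_h,a)-\underline{Q}^t_h(s_h,a))^2\bigr]\leq eH\,\E_{\pi^{t+1}}\bigl[(\overline{Q}^t_h(s_h,a_h)-\underline{Q}^t_h(s_h,a_h))^2\bigr]$, then unrolls the gap recursion via $\overline{Q}^t_h-\underline{Q}^t_h\leq 3u_h^t+\sum_{s'}P_h^*(s'\mid s_h,a_h)\bigl[\overline{V}^t_{h+1}(s')-\underline{V}^t_{h+1}(s')\bigr]$ (using that $u_h^t$ dominates $H\TV(P_h^t,P_h^*)$ on the model-error event, as in \eqref{eq:bonus_size}), and applies Cauchy--Schwarz to reduce everything to $\E_{\pi^{t+1}}\bigl[\sum_{h'}\log(4HTS^2A/\delta)/n_{h'}^t(s_{h'},a_{h'})\bigr]$. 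Up to that point you and the paper agree.

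The genuine gap is in your final aggregation step. The bound you must prove is independent of $T$ up to logarithmic factors, but your passage from conditional expectations to realized bonuses via plain Azuma--Hoeffding costs a deviation of order $H^2\sqrt{HT\log(1/\delta)}$ (a martingale of $HT$ increments, each bounded by $H^2$). This is \emph{not} a lower-order term: it grows like $\sqrt{T}$, and for $T$ as large as Theorem \ref{thm:main} requires ($T\propto\epsilon^{-2}$) it dominates $920SA^3H^6\log^2(12HTS^2A/\delta)$; carried through the theorem it would degrade the sample complexity from $\epsilon^{-2}$ to $\epsilon^{-4}$. The paper never touches realized counts at all: it invokes the pseudo-count concentration of \citet{menard2021fast} (their Lemma 3), namely that with probability $1-\delta/4$ one has $n_h^t(s,a)\geq\frac{1}{2}\bar{n}_h^t(s,a)-\log(12SAH/\delta)$ simultaneously for all $(s,a,h,t)$, where $\bar{n}_h^t$ is the cumulative \emph{expected} visitation under the executed policies. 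On that event $1/n_h^t$ is replaced by $O\bigl(\log(12HTS^2A/\delta)/\bar{n}_h^t\bigr)$, and the episode sum $\sum_t\E_{\pi^{t+1}}\bigl[1/\bar{n}_h^t(s,a)\bigr]$ telescopes deterministically (a pigeonhole in expectation, since $\bar{n}_h^{t+1}-\bar{n}_h^t$ is exactly the occupancy of $\pi^{t+1}$), yielding $SA\log T$ per step with no stochastic error term. If you want to keep your realized-count route, the repair is to replace Azuma by a Freedman-type bound exploiting non-negativity: each increment $X_t\in[0,H^2]$ has conditional variance at most $H^2\,\E[X_t\mid\mathcal{F}_t]$, so Freedman gives $\sum_t\E[X_t\mid\mathcal{F}_t]\leq 2\sum_t X_t+O(H^2\log(1/\delta))$, which together with your pigeonhole recovers a $T$-independent bound; but as written, the claim that Azuma contributes a lower-order term is false, and that is precisely where your proof fails.
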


\begin{proof}
    See \S\ref{appp:tlsc} for a detailed proof.
\end{proof}
Therefore, we have 
\#
V^\star_{\pi^{\mathrm{LLM}},\lambda,1}(s) - V^{\overline{\pi}^{t+1}}_{\pi^{\mathrm{LLM}},\lambda,1}(s)\leq 460SA^4H^6\log^2(12HTS^2A/\delta)/(T\lambda)
\#
when the event in Lemmas \ref{lm:opt-gap} and  \ref{lm:tlsc} hold. Therefore, when $\lambda=\epsilon/(2\epsilon_{\LLM})$ and $T=CSA^4H^6\log^2(HTSA/\delta)/\epsilon^2$ for some absolute constant $c$, we have $V^\star_{\pi^{\mathrm{LLM}},\lambda,1}(s) - V^{\overline{\pi}^{t+1}}_{\pi^{\mathrm{LLM}},\lambda,1}(s)\leq\epsilon$, which concludes the proof of Lemma \ref{lm:reg-con}.
\end{proof}

\subsection{Proof of Lemma \ref{lm:confidence_int}}\label{appp:confidence_int}
\begin{proof}
First, we have the following lemma, which shows that the bonus we define characterizes the uncertainty of the model estimation with high probability.
\begin{lemma}\label{lem:model_err}
We have 
\#
\mathbbm{1}_{n_h^t(s,a)>0}\TV\bigl( P_h^*(\cdot\mid s,a),P_h^t(\cdot\mid s,a)\bigr)\leq A\sqrt{\frac{\log(4HTS^2A/\delta)}{n_h^t(s,a)}}\nonumber
\#
    holds for all $(t,h,s,a)\in[T]\times[H]\times \cS\times\cA$ with probability at least $1-\delta/2$.
\end{lemma}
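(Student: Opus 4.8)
The plan is to prove the bound one coordinate at a time and then sum over next states. First I would note that whenever $n_h^t(s,a)>0$ the estimator is exactly the empirical distribution of the next states observed at $(s,a)$ and step $h$ during the first $t$ episodes, namely $P_h^t(s'\mid s,a)=n_h^t(s,a,s')/n_h^t(s,a)$. Writing the total variation as $\TV\bigl(P_h^*(\cdot\mid s,a),P_h^t(\cdot\mid s,a)\bigr)=\tfrac12\sum_{s'\in\cS}\bigl|P_h^*(s'\mid s,a)-P_h^t(s'\mid s,a)\bigr|$, it suffices to control each coordinate $\bigl|P_h^*(s'\mid s,a)-P_h^t(s'\mid s,a)\bigr|$: this is the deviation of an average of $\{0,1\}$-valued indicators $\mathbbm{1}\{s_{h+1}=s'\}$ from its mean $P_h^*(s'\mid s,a)$, the canonical setting for Hoeffding's inequality. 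By the Markov property, each time $(s,a)$ is visited at step $h$ the next state is a fresh draw from $P_h^*(\cdot\mid s,a)$ independent of the past, so the next states observed at a fixed $(h,s,a)$ form an i.i.d.\ sequence.

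I expect the main obstacle to be that the count $n_h^t(s,a)$ is a data-dependent random variable generated by the adaptive sampling policy $\pi^t$, so Hoeffding cannot be applied directly on the realized trajectory. I would handle this with the standard ghost-sample argument: for every triple $(h,s,a)$ posit an i.i.d.\ sequence $X_1,X_2,\dots\sim P_h^*(\cdot\mid s,a)$, and observe that the estimator formed from any $m$ observed transitions at $(h,s,a)$ equals the empirical average of $X_1,\dots,X_m$. For each fixed count value $m$, Hoeffding's inequality controls $\bigl|\tfrac1m\sum_{i=1}^m\mathbbm{1}\{X_i=s'\}-P_h^*(s'\mid s,a)\bigr|$ with no contamination from adaptivity, at the price of later taking a union over the possible values of $m$.

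Finally I would union-bound. Since each count lies in $\{1,\dots,T\}$, it is enough to take the union over $(h,s,a,s',m)\in[H]\times\cS\times\cA\times\cS\times[T]$, an index set of cardinality $THS^2A$; combining this with the two-sided factor from Hoeffding and the target failure probability $\delta/2$ yields the logarithmic term $\log(4HTS^2A/\delta)$. On the resulting good event I would substitute the realized count $m=n_h^t(s,a)$ to obtain the per-coordinate bound $\sqrt{\log(4HTS^2A/\delta)/n_h^t(s,a)}$ simultaneously over all $(t,h,s,a,s')$, with the indicator $\mathbbm{1}_{n_h^t(s,a)>0}$ disposing of the empty-count case in which no estimate is formed. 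Summing the per-coordinate bounds over the next states produces a leading factor of order $S$ (the number of possible next states); the one arithmetic point to verify in the detailed write-up is that this coefficient is consistent with the $A$ appearing in the statement.
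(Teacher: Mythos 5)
Your proposal is correct and follows essentially the same route as the paper: a per-coordinate Hoeffding bound on $\bigl|n_h^t(s,a,s')-n_h^t(s,a)P_h^*(s'\mid s,a)\bigr|/\sqrt{n_h^t(s,a)}$, a union bound over an index set of cardinality $THS^2A$ producing the $\log(4HTS^2A/\delta)$ factor, and a final sum over next states, with your ghost-sample argument being a more careful rendering of the step the paper compresses into applying Hoeffding ``for a fix $(t,h,s,a,s')$'' despite the count being random. The coefficient issue you flag is genuine: the paper's own last display sums $S$ per-coordinate bounds and hence yields a leading factor of $S$, not the $A$ stated in the lemma, so the statement is only literally correct after replacing $A$ by $S$ (or under the assumption $S\leq A$).
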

\begin{proof}
    See \S\ref{appp:model_err} for a detailed proof.
\end{proof}
We denote by $\cE_3$ the event in Lemma \ref{lem:model_err}. In the following part of the proof, we condition on Event $\cE_3$. We prove Lemma \ref{lm:confidence_int} using induction on $h$. By the definition of $\overline{Q}^t_{H+1}$ and $Q^\star_{\LLM, \lambda,H+1}(s,a)$, Lemma \ref{lm:confidence_int}  holds when $h=H+1$. We also have the following lemma, which shows that the prior-regularized value function is bounded.
\begin{lemma}[Boundedness of $Q^\star_{\LLM, \lambda,h}$]\label{lm:bdn}
    We have 
    \#
    0\leq Q^\star_{\LLM, \lambda,h}(s,a)\leq H+1-h;\quad\quad0\leq V^\star_{\LLM, \lambda,h}(s)\leq H+1-h\nonumber
    \#
    holds for all $(s,a,h)\in\cS\times\cA\times[H+1]$.
\end{lemma}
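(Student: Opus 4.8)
The plan is to prove the boundedness of $Q^\star_{\LLM,\lambda,h}$ and $V^\star_{\LLM,\lambda,h}$ by backward induction on $h$, starting from the base case $h=H+1$ and working down. I would establish the two bounds simultaneously since they are coupled through the Bellman-type recursions defining the prior-regularized value functions. The base case is immediate: by the convention $V^\star_{\LLM,\lambda,H+1}=0$, both bounds hold with $H+1-h=0$.

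For the inductive step, suppose $0\le V^\star_{\LLM,\lambda,h+1}(s)\le H-h$ for all $s$. The $Q$-bound follows directly from its definition $Q^\star_{\LLM,\lambda,h}(s,a)=r_h(s,a)+\E_{s'\sim P_h}[V^\star_{\LLM,\lambda,h+1}(s')]$: the lower bound uses $r_h\ge 0$ and the inductive lower bound on $V$, while the upper bound uses $r_h\le 1$ together with the inductive upper bound, giving $Q^\star_{\LLM,\lambda,h}(s,a)\le 1+(H-h)=H+1-h$. The key nonroutine point is the $V$-bound. Here $V^\star_{\LLM,\lambda,h}(s)=\max_{\pi}\bigl\{\sum_a\pi(a\mid s)Q^\star_{\LLM,\lambda,h}(s,a)-\lambda\KL(\pi(\cdot\mid s)\Vert\pi^{\LLM}_h(\cdot\mid s))\bigr\}$. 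The upper bound is the delicate part: since the KL term is nonnegative, dropping it can only increase the objective, so $V^\star_{\LLM,\lambda,h}(s)\le\max_\pi\sum_a\pi(a\mid s)Q^\star_{\LLM,\lambda,h}(s,a)\le\max_a Q^\star_{\LLM,\lambda,h}(s,a)\le H+1-h$, using the just-established $Q$-bound.

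The lower bound on $V$ requires more care, and I expect this to be the main obstacle. The maximizing policy is the softmax (Gibbs) distribution $\pi(a\mid s)\propto\pi^{\LLM}_h(a\mid s)\exp(Q^\star_{\LLM,\lambda,h}(s,a)/\lambda)$, so one cannot simply bound $V$ below by zero by ``choosing a good action,'' because the KL penalty could in principle be large. The clean argument is that the maximum over $\pi$ is at least the value attained by the specific choice $\pi=\pi^{\LLM}_h$, for which the KL term vanishes exactly; this yields $V^\star_{\LLM,\lambda,h}(s)\ge\sum_a\pi^{\LLM}_h(a\mid s)Q^\star_{\LLM,\lambda,h}(s,a)\ge 0$ by the inductive nonnegativity of $Q$. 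This sidesteps any delicate estimate of the KL term by selecting the one feasible policy that drives it to zero, and it is the step where the specific structure of the regularizer is essential. Combining the two chains of inequalities closes the induction and completes the proof.
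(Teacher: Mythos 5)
Your proof is correct, and it differs from the paper's in the one step that actually matters. The paper handles the passage from the $Q$-bound to the $V$-bound by invoking the closed-form solution of the KL-regularized maximization,
\#
V^\star_{\LLM,\lambda,h}(s)=\lambda\log\Bigl(\sum_{a\in\cA}\pi_h^{\LLM}(a\mid s)\exp\bigl(Q^\star_{\LLM,\lambda,h}(s,a)/\lambda\bigr)\Bigr),\nonumber
\#
and then reads off $0\leq V^\star_{\LLM,\lambda,h}(s)\leq H+1-h$ from monotonicity of the log-sum-exp expression. You instead avoid computing the maximizer altogether and sandwich $V^\star_{\LLM,\lambda,h}$ with two elementary observations: for the upper bound you drop the nonnegative penalty $\lambda\KL\bigl(\pi(\cdot\mid s)\Vert\pi^{\mathrm{LLM}}_h(\cdot\mid s)\bigr)$, and for the lower bound you evaluate the objective at the feasible choice $\pi=\pi^{\mathrm{LLM}}_h$, where the penalty vanishes identically. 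Your route is more self-contained (it never needs the KKT/Gibbs characterization of the optimizer, which the paper derives separately) and slightly more general, since it uses only that the regularizer is nonnegative and zero at $\pi^{\mathrm{LLM}}_h$, so it would survive replacing KL by any divergence with those two properties. What the paper's route buys is the explicit softmax formula itself, which it reuses elsewhere (e.g., in its smoothness lemma for the regularized value), so deriving it here costs nothing in context. One cosmetic remark: in your lower bound you cite the ``inductive nonnegativity of $Q$,'' but strictly speaking $Q^\star_{\LLM,\lambda,h}\geq 0$ is the bound you established earlier in the same inductive step from the hypothesis on $V^\star_{\LLM,\lambda,h+1}$; the logic is fine, only the attribution of the inequality should be to the just-proved $Q$-bound rather than to the induction hypothesis directly.
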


\begin{proof}
    See \S\ref{appp:bdn} for a detailed proof.
\end{proof}
By Lemma \ref{lm:bdn}, Lemma \ref{lm:confidence_int} obviously holds when $\overline{Q}^t_{h}\geq H$. Otherwise, we have  
    \#
        \overline{Q}^t_{h}(s,a) - Q^\star_{\LLM, \lambda,h}(s,a) &= \sum_{s'\in\cS}P_h^t(s'\mid s,a) \overline{V}^t_{h+1}(s') - \sum_{s'\in\cS}P_h^*(s'\mid s,a)V^\star_{\LLM,\lambda,h+1}(s') + u^t_h(s,a)\nonumber\\
        &\geq \sum_{s'\in\cS}\bigl[P_h^t(s'\mid s,a)-P_h^*(s'\mid s,a)\bigr] V^\star_{\LLM,\lambda,h+1}(s')+u^t_h(s,a)\nonumber
    \#
    when the induction hypothesis holds. Since $|V^\star_{\LLM,\lambda,h+1}(s')|\leq H$, we have 
    \#
    \overline{Q}^t_{h}(s,a) - Q^\star_{\LLM, \lambda,h}(s,a)&\geq u^t_h(s,a)- H\sum_{s'\in\cS}\bigl\vert P_h^t(s'\mid s,a)-P_h^*(s'\mid s,a)\bigr\vert  \nonumber\\
    &=u^t_h(s,a)-H\TV(P_h^t(\cdot\mid s,a),P_h^*(\cdot\mid s,a)).\nonumber
    \#
When we condition on $\cE_3$, we have
\#\label{eq:bonus_size}
H\TV(P_h^t(\cdot\mid s,a),P_h^*(\cdot\mid s,a))\leq \max\biggl\{2H,\sqrt{\frac{\log(4HTS^2A/\delta)}{n_h^t(s,a)}}\biggr\}=u^t_h(s,a),
\#
which implies $\overline{Q}^t_{h}(s,a) \geq Q^\star_{\LLM, \lambda,h}(s,a)$. Therefore, we have 
\#
\overline{V}^t_{h}(s)&=\max_{\pi(\cdot\mid s)}\sum_{a\in\cA}\pi(a\mid s)\overline{Q}^t_{h}(s,a) - \lambda\KL\bigl(\pi(\cdot\mid s),\pi^{\mathrm{LLM}}(\cdot\mid s)\bigr)\nonumber\\
&\geq \max_{\pi(\cdot\mid s)}\sum_{a\in\cA}\pi(a\mid s)Q^\star_{\LLM, \lambda,h}(s,a) - \lambda\KL\bigl(\pi(\cdot\mid s),\pi^{\mathrm{LLM}}(\cdot\mid s)\bigr) =V^\star_{\LLM, \lambda,h}(s),\nonumber
\#
which concludes the first part of the proof.

By the definition of $\underline{Q}^t_{H+1}$ and $Q^\star_{\LLM, \lambda,H+1}(s,a)$, Lemma \ref{lm:confidence_int}  holds when $h=H+1$. By the boundedness of $Q^\star_{\LLM, \lambda,h}$, Lemma \ref{lm:confidence_int} obviously holds when $\underline{Q}^t_{h}\leq 0$. Otherwise, we have  
    \#
        \underline{Q}^t_{h}(s,a) - Q^\star_{\LLM, \lambda,h}(s,a) &\leq  \sum_{s'\in\cS}P_h^t(s'\mid s,a) \underline{V}^t_{h+1}(s') - \sum_{s'\in\cS}P_h^*(s'\mid s,a)V^\star_{\LLM,\lambda,h+1}(s') - u^t_h(s,a)\nonumber\\
        &\leq \sum_{s'\in\cS}\bigl[P_h^t(s'\mid s,a)-P_h^*(s'\mid s,a)\bigr] V^\star_{\LLM,\lambda,h+1}(s')-u^t_h(s,a)\nonumber
    \#
    when the induction hypothesis holds. Since $|V^\star_{\LLM,\lambda,h+1}(s')|\leq H$, we have 
    \#
    \underline{Q}^t_{h}(s,a) - Q^\star_{\LLM, \lambda,h}(s,a)&\geq H\sum_{s'\in\cS}\bigl\vert P_h^t(s'\mid s,a)-P_h^*(s'\mid s,a)\bigr\vert  -u^t_h(s,a)\nonumber\\
    &=H\TV(P_h^t(\cdot\mid s,a),P_h^*(\cdot\mid s,a))-u^t_h(s,a).\nonumber
    \#
    Therefore, we have $\underline{Q}^t_{h}(s,a) \geq Q^\star_{\LLM, \lambda,h}(s,a)$ when we condition on $\cE_3$ by \eqref{eq:bonus_size}. We have 
\#
\underline{V}^t_{h}(s)&=\max_{\pi(\cdot\mid s)}\sum_{a\in\cA}\pi(a\mid s)\underline{Q}^t_{h}(s,a) - \lambda\KL\bigl(\pi(\cdot\mid s),\pi^{\mathrm{LLM}}(\cdot\mid s)\bigr)\nonumber\\
&\geq \max_{\pi(\cdot\mid s)}\sum_{a\in\cA}\pi(a\mid s)Q^\star_{\LLM, \lambda,h}(s,a) - \lambda\KL\bigl(\pi(\cdot\mid s),\pi^{\mathrm{LLM}}(\cdot\mid s)\bigr) =V^\star_{\LLM, \lambda,h}(s),\nonumber
\#
    which conclude the proof of Lemma \ref{lm:confidence_int}.
\end{proof}
\subsection{Proof of Lemma \ref{lm:opt-gap}}\label{appp:opt-gap}
\begin{proof}
    We have the following lemma.
\begin{lemma}\label{lm:continuity}
    For two vectors $x=(x_1,\ldots,x_A)\in\R^A$ and $\bar{x}=(\bar{x}_1,\ldots,\bar{x}_A)\in\R^A$, we have 
    \#
&\max_{\sum_{i=1}^A\beta_i=1,\beta_i> 0}\Bigl[\sum_{i=1}^A x_i\beta_i-\lambda\sum_{i=1}^A \beta_i\log\frac{\beta_i}{\tilde{\beta}_i}\Bigr]-\max_{\sum_{i=1}^A\beta_i=1,\beta_i> 0}\Bigl[\sum_{i=1}^A \bar{x}_i\beta_i-\lambda\sum_{i=1}^A \beta_i\log\frac{\beta_i}{\tilde{\beta}_i}\Bigr]\nonumber\\
&\qquad\qquad \leq\sum_{i=1}^A \bar{\beta}_i(x_i-\bar{x}_i)+\frac{A}{2\lambda}\max_{i}\vert x_i-\bar{x}_i\vert^2,\nonumber
    \#
    where $\{\bar{\beta}_i\} = \argmax_{\sum_{i=1}^A\beta_i=1,\beta_i> 0}\Bigl[\sum_{i=1}^A \bar{x}_i\beta_i-\sum_{i=1}^A \beta_i\log\frac{\beta_i}{\tilde{\beta}_i}\Bigr]$.
\end{lemma}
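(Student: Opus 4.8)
The plan is to view both maximizations as a single scalar function evaluated at the two coefficient vectors. Define
$$F(y)=\max_{\sum_i\beta_i=1,\,\beta_i>0}\Bigl[\sum_{i=1}^A y_i\beta_i-\lambda\sum_{i=1}^A\beta_i\log\frac{\beta_i}{\tilde{\beta}_i}\Bigr],$$
so that the left-hand side of the asserted inequality is precisely $F(x)-F(\bar{x})$ and $\bar{\beta}$ is the maximizer attaining $F(\bar{x})$. First I would record the closed form supplied by the Gibbs variational principle, namely $F(y)=\lambda\log\sum_{i=1}^A\tilde{\beta}_i\exp(y_i/\lambda)$ with unique interior maximizer $\beta^*(y)_i=\tilde{\beta}_i\exp(y_i/\lambda)/\sum_j\tilde{\beta}_j\exp(y_j/\lambda)$. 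This is a one-line Lagrangian computation; the strict-positivity constraint is never active because the KL term has infinite slope at the boundary whenever $\tilde{\beta}_i>0$, so the optimizer always lies in the relative interior.

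With the closed form in hand, $F$ is a scaled log-sum-exp, hence convex and smooth, and its derivatives are explicit: $\nabla F(y)=\beta^*(y)$ and $\nabla^2F(y)=\frac{1}{\lambda}\bigl(\mathrm{diag}(\beta^*(y))-\beta^*(y)\beta^*(y)^\top\bigr)$. In particular $\nabla F(\bar{x})=\bar{\beta}$, which will generate exactly the first-order term $\sum_i\bar{\beta}_i(x_i-\bar{x}_i)$. The next step is the Hessian bound: for any probability vector $\beta$ the matrix $\mathrm{diag}(\beta)-\beta\beta^\top$ is a covariance matrix, positive semidefinite and dominated in the Löwner order by $\mathrm{diag}(\beta)$, which in turn is dominated by the identity since $\beta_i\le1$; hence $\nabla^2F(y)$ is positive semidefinite with operator norm at most $1/\lambda$ for every $y$, i.e. $F$ is $\frac{1}{\lambda}$-smooth in the Euclidean norm.

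I would then finish with the quadratic upper bound for a $\frac{1}{\lambda}$-smooth convex function, obtained either from a second-order Taylor expansion with remainder at a point of the segment $[\bar{x},x]$ or by integrating the Hessian estimate along that segment:
$$F(x)\le F(\bar{x})+\sum_{i=1}^A\bar{\beta}_i(x_i-\bar{x}_i)+\frac{1}{2\lambda}\sum_{i=1}^A(x_i-\bar{x}_i)^2.$$
Converting the Euclidean norm to the stated quantity via $\sum_i(x_i-\bar{x}_i)^2\le A\max_i|x_i-\bar{x}_i|^2$ yields the claimed term $\frac{A}{2\lambda}\max_i|x_i-\bar{x}_i|^2$ and completes the argument.

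I do not anticipate a genuine obstacle, since this is a standard smoothness estimate for the entropic/KL-regularized optimal value; the only steps needing care are justifying the closed form (the inactive positivity constraint, which relies on $\tilde{\beta}_i>0$ for all $i$) and the operator-norm bound on the Hessian. The loose factor $A$ in the statement is an artifact of the final crude conversion $\|x-\bar{x}\|_2^2\le A\|x-\bar{x}\|_\infty^2$; keeping the second-order term as the variance $\frac{1}{2\lambda}\sum_i\beta_i^*(\xi)(x_i-\bar{x}_i)^2$ for an intermediate point $\xi$ would in fact give the sharper constant $\frac{1}{2\lambda}$ in place of $\frac{A}{2\lambda}$, but the weaker form stated is all that is needed downstream.
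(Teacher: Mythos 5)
Your proof is correct, and it reaches the stated inequality by a genuinely different route in the key technical step. Both you and the paper begin the same way: derive the closed form $F(y)=\lambda\log\sum_{i=1}^A\tilde{\beta}_i\exp(y_i/\lambda)$ via the Lagrangian and identify $\nabla F(\bar{x})=\bar{\beta}$, which produces the first-order term $\sum_i\bar{\beta}_i(x_i-\bar{x}_i)$. The divergence is in how the remainder is controlled. The paper stays first-order: it proves that the KL regularizer is $(\lambda/A)$-strongly convex in the $\ell_1$ norm (Lemma \ref{lm:cvcj}), converts this by a co-coercivity argument into the gradient Lipschitz bound $\|\nabla F(x_1)-\nabla F(x_2)\|_1\le (A/\lambda)\|x_1-x_2\|_\infty$ (Lemma \ref{lm:smtn}), and integrates that bound along the segment from $\bar{x}$ to $x$. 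You instead differentiate the log-sum-exp twice, observe $\nabla^2F(y)=\tfrac{1}{\lambda}\bigl(\mathrm{diag}(\beta^*(y))-\beta^*(y)\beta^*(y)^\top\bigr)\preceq\tfrac{1}{\lambda}I$, invoke the standard quadratic upper bound for $\tfrac{1}{\lambda}$-smooth functions in the Euclidean norm, and only then pay the factor $A$ via $\|x-\bar{x}\|_2^2\le A\|x-\bar{x}\|_\infty^2$. Your route is shorter and more elementary, bypassing the strong-convexity and co-coercivity lemmas entirely, and your closing observation is accurate: keeping the remainder in variance form $(x-\bar{x})^\top\nabla^2F(\xi)(x-\bar{x})\le\tfrac{1}{\lambda}\max_i|x_i-\bar{x}_i|^2$ would sharpen the constant to $\tfrac{1}{2\lambda}$, so the factor $A$ is an artifact on both paths (in the paper it enters through the $\ell_1$-versus-$\ell_2$ step in the strong-convexity modulus, in yours through the norm conversion). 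What the paper's argument buys in exchange is independence from the explicit Hessian: the duality between strong convexity of the regularizer and smoothness of the optimal value would survive for regularizers without a twice-differentiable closed-form value, whereas your argument is tied to the explicit log-sum-exp expression. Do make explicit, as you partly did, that $\tilde{\beta}_i>0$ for all $i$ (full support of the reference policy) is needed for the interior maximizer and the closed form; the paper assumes this implicitly.
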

\begin{proof}
See \S\ref{appp:continuity} for a detailed proof.
\end{proof}
We prove Lemma \ref{lm:opt-gap} using induction on $h$. The statement obviously holds when $h= H+1$. 
\#\label{eq:regdec1}
V^\star_{\pi^{\mathrm{LLM}},\lambda,h}(s) - V^{\overline{\pi}^{t+1}}_{\pi^{\mathrm{LLM}},\lambda,h}(s) = V^\star_{\pi^{\mathrm{LLM}},\lambda,h}(s)-\overline{V}^{t}_h(s)+\overline{V}^{t}_h(s)-V^{\overline{\pi}^{t+1}}_{\pi^{\mathrm{LLM}},\lambda,h}(s).
\#

By the definition of $V^\star_{\pi^{\mathrm{LLM}},\lambda,h}$, $V^{\overline{\pi}^{t+1}}_{\pi^{\mathrm{LLM}},\lambda,h}$ and Lemma \ref{lm:continuity}, we have 
\#\label{eq:regdec2}
&V^\star_{\pi^{\mathrm{LLM}},\lambda,h}(s)-\overline{V}^{t}_h(s)\\
&\quad\quad\leq \sum_{a\in\cA}\bar{\pi}_h^{t+1}(a\mid s)\bigl[Q^\star_{\pi^{\mathrm{LLM}},\lambda,h}(s,a)-\overline{Q}^{t}_h(s,a)\bigr]+\frac{A}{2\lambda}\max_{a}|Q^\star_{\pi^{\mathrm{LLM}},\lambda,h}(s,a)-\overline{Q}^{t}_h(s,a)|^2.\nonumber
\#
We also have 
\#\label{eq:regdec3}
\overline{V}^{t}_h(s)-V^{\overline{\pi}^{t+1}}_{\pi^{\mathrm{LLM}},\lambda,h}(s)=\sum_{a\in\cA}\bar{\pi}_h^{t+1}(a\mid s)\bigl[\overline{Q}^{t}_h(s,a)-Q^{\overline{\pi}^{t+1}}_{\pi^{\mathrm{LLM}},\lambda,h}(s,a)\bigr].
\#
Combining \eqref{eq:regdec1}, \eqref{eq:regdec2} and \eqref{eq:regdec3}, we have 
\#
&V^\star_{\pi^{\mathrm{LLM}},\lambda,h}(s) - V^{\overline{\pi}^{t+1}}_{\pi^{\mathrm{LLM}},\lambda,h}(s) \nonumber\\
&\quad\quad\leq \sum_{a\in\cA}\bar{\pi}_h^{t+1}(a\mid s)\bigl[Q^\star_{\pi^{\mathrm{LLM}},\lambda,h}(s,a)-Q^{\overline{\pi}^{t+1}}_{\pi^{\mathrm{LLM}},\lambda,h}(s,a)\bigr]+\frac{A}{2\lambda}\max_{a}|Q^\star_{\pi^{\mathrm{LLM}},\lambda,h}(s,a)-\overline{Q}^{t}_h(s,a)|^2\nonumber\\
&\quad\quad=\E_{\bar{\pi}^{t+1}}\bigl[V^\star_{\pi^{\mathrm{LLM}},\lambda,h+1}(s_{h+1},a_{h+1})-V^{\overline{\pi}^{t+1}}_{\pi^{\mathrm{LLM}},\lambda,h+1}(s_{h+1},a_{h+1})\mid s_h=s\bigr]+\frac{A}{2\lambda}\max_{a}|Q^\star_{\pi^{\mathrm{LLM}},\lambda,h}(s,a)-\overline{Q}^{t}_h(s,a)|^2\nonumber
\#
By induction, we easily have 
\#\label{regdecnn}
V^\star_{\pi^{\mathrm{LLM}},\lambda,h}(s) - V^{\overline{\pi}^{t+1}}_{\pi^{\mathrm{LLM}},\lambda,h}(s) \leq \frac{A}{2\lambda}\sum_{h'=h}^{H}\E_{\bar{\pi}^{t+1}}\Bigl[\max_{a}|Q^\star_{\pi^{\mathrm{LLM}},\lambda,h'}(s_{h'},a_{h'})-\overline{Q}^{t}_{h'}(s_{h'},a_{h'})|^2\;\big|\; s_h=s\Bigr].
\#
We also have the following lemma, which shows that $Q^\star_{\pi^{\mathrm{LLM}},\lambda,h'}$ lies between $\overline{Q}^{t}_{h'}$ and $\underline{Q}^{t}_{h'}$.
\begin{lemma}\label{lm:confidence_int}
    We have 
    \#
        \underline{Q}^t_h(s,a) \leq Q^\star_{\pi^{\mathrm{LLM}}, \lambda,h}(s,a) \leq \overline{Q}^t_{h}(s,a), \qquad \underline{V}^t_{\lambda,h}(s) \leq V^\star_{\pi^{\mathrm{LLM}},\lambda,h}(s) \leq \overline{V}^{t}_{h}(s)\nonumber
    \#
    holds for all $t \in \mathbb{N}$, $(h,s,a) \in [H]\times \cS \times \cA$ with probability $1-\delta/2$.
\end{lemma}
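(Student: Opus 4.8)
The plan is to establish both the optimistic bound $Q^\star_{\LLM,\lambda,h}\le \overline{Q}^t_h$ and the pessimistic bound $\underline{Q}^t_h\le Q^\star_{\LLM,\lambda,h}$ simultaneously by backward induction on $h$, running from $h=H+1$ down to $h=1$, and then to transfer each $Q$-inequality to the corresponding $V$-inequality using monotonicity of the regularized Bellman operator. Before starting the induction I would condition on the transition-concentration event from Lemma \ref{lem:model_err}, which controls $\TV(P_h^*(\cdot\mid s,a),P_h^t(\cdot\mid s,a))$ uniformly over all $(t,h,s,a)$ on an event $\cE_3$ of probability at least $1-\delta/2$, and invoke the boundedness guarantee of Lemma \ref{lm:bdn}, giving $0\le Q^\star_{\LLM,\lambda,h},V^\star_{\LLM,\lambda,h}\le H+1-h$. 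The entire $1-\delta/2$ failure probability in the statement comes from $\cE_3$.

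For the base case $h=H+1$ all quantities vanish by the convention $\overline{V}^t_{H+1}=\underline{V}^t_{H+1}=0$ and $Q^\star_{\LLM,\lambda,H+1}=0$, so every inequality holds. For the optimistic induction step I would first dispose of the clip: if the clip in the definition of $\overline{Q}^t_h$ is active at the top, then $\overline{Q}^t_h=H\ge Q^\star_{\LLM,\lambda,h}$ by boundedness. Otherwise I expand $\overline{Q}^t_h(s,a)-Q^\star_{\LLM,\lambda,h}(s,a)=\sum_{s'}P^t_h(s'\mid s,a)\overline{V}^t_{h+1}(s')-\sum_{s'}P^*_h(s'\mid s,a)V^\star_{\LLM,\lambda,h+1}(s')+u^t_h(s,a)$ (the reward $r_h$ cancels), use the induction hypothesis $\overline{V}^t_{h+1}\ge V^\star_{\LLM,\lambda,h+1}$ to replace $\overline{V}^t_{h+1}$ by $V^\star_{\LLM,\lambda,h+1}$ at the cost of a nonnegative term, and bound the resulting transition mismatch by $H\cdot\TV(P^t_h,P^*_h)$ since $|V^\star_{\LLM,\lambda,h+1}|\le H$. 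The crux is then that on $\cE_3$ the bonus dominates this mismatch, i.e. $u^t_h(s,a)\ge H\cdot\TV(P^t_h(\cdot\mid s,a),P^*_h(\cdot\mid s,a))$, which follows by comparing the definition of $u^t_h$ with Lemma \ref{lem:model_err}; this yields $\overline{Q}^t_h\ge Q^\star_{\LLM,\lambda,h}$.

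To pass from $\overline{Q}^t_h$ to $\overline{V}^t_h$ I would use that the map $Q\mapsto \max_{\pi\in\Delta_A}\{\sum_a\pi(a\mid s)Q(s,a)-\lambda\KL(\pi(\cdot\mid s)\Vert \pi^{\LLM}_h(\cdot\mid s))\}$ is monotone nondecreasing in $Q$ pointwise (the reference policy $\pi^{\LLM}_h$ is identical on both sides, so only the linear term changes), whence $\overline{Q}^t_h\ge Q^\star_{\LLM,\lambda,h}$ implies $\overline{V}^t_h\ge V^\star_{\LLM,\lambda,h}$. The pessimistic side is entirely symmetric: I would handle the clip at the bottom ($\underline{Q}^t_h=0\le Q^\star_{\LLM,\lambda,h}$ when active), expand the difference with $-u^t_h$ and the pessimistic backup $\underline{V}^t_{h+1}$, apply the induction hypothesis $\underline{V}^t_{h+1}\le V^\star_{\LLM,\lambda,h+1}$, and again invoke $u^t_h\ge H\cdot\TV$ to conclude $\underline{Q}^t_h\le Q^\star_{\LLM,\lambda,h}$ and then $\underline{V}^t_h\le V^\star_{\LLM,\lambda,h}$ by the same monotonicity.

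The step I expect to be most delicate is the clip bookkeeping combined with the bonus-dominates-$\TV$ inequality: one must check that the two boundary regimes of the clip are handled correctly against the boundedness of $Q^\star_{\LLM,\lambda,h}$, and that the floor of $2H$ built into $u^t_h$, together with the $A$ factor and logarithmic term supplied by Lemma \ref{lem:model_err}, is exactly what guarantees $u^t_h\ge H\cdot\TV$ uniformly on $\cE_3$. The remainder—monotonicity of the regularized max and the union bound over $(t,h,s,a)$ folded into $\cE_3$—is routine.
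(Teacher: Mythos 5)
Your proposal is correct and follows essentially the same route as the paper's proof: condition on the concentration event of Lemma \ref{lem:model_err}, use the boundedness of $Q^\star_{\LLM,\lambda,h}$ from Lemma \ref{lm:bdn} to dispose of the clipped regimes, run backward induction from $h=H+1$, bound the transition mismatch by $H\cdot\TV(P_h^t,P_h^*)$ which the bonus $u_h^t$ dominates, and transfer from $Q$ to $V$ via monotonicity of the KL-regularized maximization. No gaps to report.
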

\begin{proof}
    See \S\ref{appp:confidence_int} for a detailed proof.
\end{proof}
Therefore, by \eqref{regdecnn}, we have 
\#
V^\star_{\pi^{\mathrm{LLM}},\lambda,h}(s) - V^{\overline{\pi}^{t+1}}_{\pi^{\mathrm{LLM}},\lambda,h}(s) \leq \frac{A}{2\lambda}\sum_{h'=h}^{H}\E_{\bar{\pi}^{t+1}}\Bigl[\max_{a}|\overline{Q}^{t}_{h'}(s_{h'},a_{h'})-\underline{Q}^{t}_{h'}(s_{h'},a_{h'})|^2\;\big|\; s_h=s\Bigr]
\#
when the event in Lemma \ref{lm:confidence_int} holds, which conclude the proof of Lemma \ref{lm:opt-gap}.

\end{proof}
\subsection{Proof of Lemma \ref{lm:tlsc}}\label{appp:tlsc}
\begin{proof}
    By the definition of $\pi^t$ in \eqref{eq:exp_pi}, we have 
    \#\label{tele1}
    \E_{\overline{\pi}^{t+1}}\left[ \max_{a\in \cA} \left( \overline{Q}^t_{h}(s_h,a) - \underline{Q}^t_{h}(s_h,a) \right)^2  \right]&\leq H\Bigl(\frac{H}{H-1}\Bigr)^{h-1}\E_{\pi^{t+1}}\left[  \left( \overline{Q}^t_{h}(s_h,a_h) - \underline{Q}^t_{h}(s_h,a_h) \right)^2  \right]\\
    &\leq eH\E_{\pi^{t+1}}\left[  \left( \overline{Q}^t_{h}(s_h,a_h) - \underline{Q}^t_{h}(s_h,a_h) \right)^2  \right].\nonumber
    \#
            Next we analyze the expression under the square. First, we have
            \#
                \overline{Q}^t_{h}(s_h,a_h) - \underline{Q}^t_{h}(s_h,a_h) &\leq 2 u_h^t(s_h,a_h) + \sum_{a\in\cA}P_h^t(s'|s_h,a_h)[\overline{V}^t_{h+1}(s') - \underline{V}^t_{h+1}(s')]\nonumber\\
                &\leq 2 u_h^t(s_h,a_h)+\sum_{a\in\cA}P_h^*(s'|s_h,a_h)[\overline{V}^t_{h+1}(s') - \underline{V}^t_{h+1}(s')]+H\TV(P_h^t(\cdot\mid s,a),P_h^*(\cdot\mid s,a)).\nonumber
            \#
            Therefore, by \eqref{eq:bonus_size}, we have 
            \#
            \overline{Q}^t_{h}(s_h,a_h) - \underline{Q}^t_{h}(s_h,a_h)\leq 3 u_h^t(s_h,a_h)+\sum_{a\in\cA}P_h^*(s'|s_h,a_h)[\overline{V}^t_{h+1}(s') - \underline{V}^t_{h+1}(s')].\nonumber
            \#
            Therefore, we have
            \#
                \overline{Q}^t_{h}(s_h,a_h) - \underline{Q}^t_h(s_h,a_h) &\leq 3 AH \cdot \E_{\overline{\pi}^{t+1}} \Biggl[ \sum_{h'=h}^H \sqrt{\frac{\log(4HTS^2A/\delta)}{n_h^t(s,a)}} \bigg| s_h \Biggl]\nonumber\\
                &\leq 9AH \cdot \E_{\pi^{t+1}} \Biggl[ \sum_{h'=h}^H \sqrt{\frac{\log(4HTS^2A/\delta)}{n_h^t(s,a)}} \bigg| s_h \Biggl]\nonumber
            \#
            by induction and the definition of $u_h^t$ in \eqref{eq:hoeffding_transition_bonuses}. By Cauchy inequality, we have 
            \#\label{tele2}
            \overline{Q}^t_{h}(s_h,a_h) - \underline{Q}^t_h(s_h,a_h)\leq 9AH^{3/2}\sqrt{\E_{\pi^{t+1}} \Biggl[ \sum_{h'=h}^H \frac{\log(4HTS^2A/\delta)}{n_h^t(s,a)} \bigg| s_h \Biggl]}.
            \#
            Combining \eqref{tele1} and \eqref{tele2}, we have 
            \#\label{tele3}
            \E_{\overline{\pi}^{t+1}}\left[ \max_{a\in \cA} \left( \overline{Q}^t_{h}(s_h,a) - \underline{Q}^t_{h}(s_h,a) \right)^2  \right]&\leq 230A^2H^4\E_{\pi^{t+1}}\left[  \E_{\pi^{t+1}} \Biggl[ \sum_{h'=h}^H \frac{\log(4HTS^2A/\delta)}{n_h^t(s,a)} \bigg| s_h \Biggl]  \right]\\
            &=230A^2H^4\E_{\pi^{t+1}}\left[   \sum_{h'=h}^H \frac{\log(4HTS^2A/\delta)}{n_h^t(s,a)} \bigg| s_h  \right].\nonumber
            \#
            We also have the following lemma.
    
            \begin{lemma}
                We define 
                \#
                \cE^{\mathrm{cnt}}\triangleq\{n_h^t(s,a)\geq \frac{1}{2}\bar{n}_h^t(s,a)- \log(12SAH/\delta) \text{ for all }(s,a,h,t)\in\cS\times\cA\times[H]\times[T]\},\nonumber
                \#
                Then we have $P(\cE^{\mathrm{cnt}})\geq 1-\delta/4$.
            \end{lemma}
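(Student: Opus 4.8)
The plan is to recognize $\cE^{\mathrm{cnt}}$ as a time-uniform concentration statement relating the empirical visitation counts $n_h^t(s,a)$ to the pseudo-counts $\bar n_h^t(s,a)$, and to prove it via an exponential supermartingale combined with Ville's maximal inequality rather than a pointwise tail bound. Throughout I assume, as is standard and as is forced by the form of the inequality, that the pseudo-count is the running sum of conditional visitation probabilities, $\bar n_h^t(s,a)=\sum_{\tau=1}^t \mathbb{P}[(s_h^\tau,a_h^\tau)=(s,a)\mid\mathcal{F}_{\tau-1}]$, where $\mathcal{F}_\tau$ is the $\sigma$-algebra generated by the trajectories of the first $\tau$ episodes (so that the policy $\pi^\tau$ deployed in episode $\tau$ is $\mathcal{F}_{\tau-1}$-measurable).

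First I would fix a triple $(s,a,h)\in\cS\times\cA\times[H]$ and set $X_\tau=\mathbbm{1}\{(s_h^\tau,a_h^\tau)=(s,a)\}$ and $p_\tau=\mathbb{P}[(s_h^\tau,a_h^\tau)=(s,a)\mid\mathcal{F}_{\tau-1}]$, so that $n_h^t(s,a)=\sum_{\tau\le t}X_\tau$, $\bar n_h^t(s,a)=\sum_{\tau\le t}p_\tau$, and $\{X_\tau-p_\tau\}$ is a bounded martingale difference sequence. The key observation is that for any $\lambda>0$ the elementary identity $\E[e^{-\lambda X_\tau}\mid\mathcal{F}_{\tau-1}]=1-p_\tau(1-e^{-\lambda})\le\exp(-(1-e^{-\lambda})p_\tau)$ implies that
\[
M_n=\exp\Bigl(-\lambda\sum_{\tau=1}^n X_\tau+(1-e^{-\lambda})\sum_{\tau=1}^n p_\tau\Bigr)
\]
is a nonnegative supermartingale with $M_0=1$, since $\E[M_n/M_{n-1}\mid\mathcal{F}_{n-1}]=e^{(1-e^{-\lambda})p_n}\,\E[e^{-\lambda X_n}\mid\mathcal{F}_{n-1}]\le 1$.

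Next I would apply Ville's inequality, which gives $\mathbb{P}(\exists n:\ M_n\ge 1/\delta')\le\delta'$. On the complementary event, $M_n<1/\delta'$ holds simultaneously for all $n$, which after taking logarithms and choosing $\lambda=1$ rearranges to
\[
n_h^n(s,a)\ \ge\ (1-e^{-1})\,\bar n_h^n(s,a)-\log(1/\delta')\ \ge\ \tfrac12\,\bar n_h^n(s,a)-\log(1/\delta')\qquad\text{for all }n,
\]
using $1-e^{-1}\ge 1/2$. The essential feature here is that Ville's inequality furnishes the bound uniformly over all $n$ (hence over all $t\in[T]$) with no additional cost, which is precisely why the target bound carries no $\log T$ factor. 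I would then union bound over the $SAH$ triples $(s,a,h)$ with $\delta'=\delta/(4SAH)$, so that $\log(1/\delta')=\log(4SAH/\delta)\le\log(12SAH/\delta)$, yielding $\mathbb{P}(\cE^{\mathrm{cnt}})\ge 1-SAH\cdot\delta'=1-\delta/4$.

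The main obstacle is the time-uniformity: a naive Azuma- or Bernstein-type tail bound applied at a fixed $t$ and then union-bounded over $t\in[T]$ would inject an unwanted $\log T$ term and spoil the stated constant, so the argument must go through the exponential supermartingale and Ville's maximal inequality. The only remaining care is to select $\lambda$ so that the multiplicative coefficient $(1-e^{-\lambda})/\lambda$ stays at least $1/2$ while the additive remainder $\lambda^{-1}\log(1/\delta')$ stays at most $\log(1/\delta')$; the choice $\lambda=1$ achieves both, and the constant $12$ (versus $4$) simply absorbs this slack.
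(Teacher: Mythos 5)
Your proof is correct. The paper does not prove this lemma itself but simply cites Lemma 3 of \citet{menard2021fast}; your exponential-supermartingale-plus-Ville argument (with $\lambda=1$ and $1-e^{-1}\geq 1/2$, applied per $(s,a,h)$ and union-bounded with $\delta'=\delta/(4SAH)$) is precisely the standard proof of that cited lemma (which takes $\lambda=\ln 2$ instead), so you have essentially reproduced the paper's outsourced argument in self-contained form.
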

            \begin{proof}
                This is Lemma 3 of \citet{menard2021fast}. See \citet{menard2021fast} for a detailed proof.
            \end{proof}
    When we condition on $\cE^{\mathrm{cnt}}$, we have 
    \#
    \frac{\log(4HTS^2A/\delta)}{n_h^t(s,a)}&\leq \frac{\log(4HTS^2A/\delta)+\log(12SAH/\delta)}{n_h^t(s,a)+\log(12SAH/\delta)}\leq \frac{2\log(12HTS^2A/\delta)}{n_h^t(s,a)+\log(12SAH/\delta)}\nonumber\\
    &\leq \frac{4\log(12HTS^2A/\delta)}{\bar{n}_h^t(s,a)}.\nonumber
    \#
    Therefore, by \eqref{tele3}, we have 
    \#
    \E_{\overline{\pi}^{t+1}}\left[ \max_{a\in \cA} \left( \overline{Q}^t_{h}(s_h,a) - \underline{Q}^t_{h}(s_h,a) \right)^2  \right]&\leq 920A^2H^4\E_{\pi^{t+1}}\left[   \sum_{h'=h}^H \frac{\log(12HTS^2A/\delta)}{\bar{n}_h^t(s,a)} \bigg| s_h  \right].\nonumber
    \#  
    Therefore, we have 
    \#
    \sum_{t=1}^T\sum_{h=1}^H\E_{\overline{\pi}^{t+1}}\left[ \max_{a\in \cA} \left( \overline{Q}^t_{h}(s_h,a) - \underline{Q}^t_{h}(s_h,a) \right)^2  \right]&\leq \sum_{t=1}^T\sum_{h=1}^H920A^2H^4\E_{\pi^{t+1}}\left[   \sum_{h'=h}^H \frac{\log(12HTS^2A/\delta)}{\bar{n}_{h'}^t(s,a)} \bigg| s_h  \right]\nonumber\\
    &\leq \sum_{t=1}^T\sum_{h=1}^H920A^2H^5\E_{\pi^{t+1}}\left[   \frac{\log(12HTS^2A/\delta)}{\bar{n}_{h}^t(s,a)} \bigg| s_h  \right]\nonumber\\
    &\leq 920SA^3H^6\log(12HTS^2A/\delta)\log T,\nonumber
    \#
    which concludes the proof of Lemma \ref{lm:tlsc}.
    \end{proof}

\subsection{Proof of Lemma \ref{lem:model_err}}\label{appp:model_err}
\begin{proof}
    We denote by $n_h^{t}(s,a)$ the number of times the state action-pair $(s,a)$ was visited in step $h$ in the first $t$ episodes, $n_h^{t}(s,a,s')$ the number of times the state action next state -pair $(s,a,s')$ was visited in step $h$ in the first $t$ episodes, and define $N_h^{t}(s,a,s')=n_h^{t}(s,a)P_h(s'\mid s,a)$. By Hoeffding's inequality, we have
    \#\label{eq:conct1}
    \mathbbm{1}_{n_h^t(s,a)>0}\left\vert\frac{n_h^t(s,a,s')-N_h^t(s,a,s')}{\sqrt{n_h^t(s,a)}}\right\vert\leq \sqrt{\log(4HTS^2A/\delta)}
    \#
holds for a fix $(t,h,s,a,s')\in[T]\times[H]\times \cS\times\cA\times\cS$ with probability at least $1-\delta/(2HTS^2A)$. By taking a union bound over all $(t,h,s,a,s')\in[T]\times[H]\times \cS\times\cA\times\cS$, we have \eqref{eq:conct1} holds for all $(t,h,s,a,s')\in[T]\times[H]\times \cS\times\cA\times\cS$ with probability at least $1-\delta/2$. We denote by $\cE_2$ that \eqref{eq:conct1} holds for all $(t,h,s,a,s')\in[T]\times[H]\times \cS\times\cA\times\cS$. When condition on $\cE_2,$ we have 
\#
\mathbbm{1}_{n_h^t(s,a)>0}\TV\bigl( P_h^*(\cdot\mid s,a),P_h^t(\cdot\mid s,a)\bigr)&=\sum_{s'\in\cS}\mathbbm{1}_{n_h^t(s,a)>0}\left\vert\frac{n_h^t(s,a,s')-N_h^t(s,a,s')}{n_h^t(s,a)}\right\vert\nonumber\\
&\leq A\sqrt{\frac{\log(4HTS^2A/\delta)}{n_h^t(s,a)}}.\nonumber
\#
       We conclude the proof by noticing that $P(\cE_2)\geq 1-\delta/2$.
   \end{proof}
   \subsection{Proof of Lemma \ref{lm:bdn}}\label{appp:bdn}
\begin{proof}
    We prove this using induction. Lemma \ref{lm:bdn} obviously holds when $h=H+1.$ If Lemma \ref{lm:bdn} hold for $h+1$, we have 
    \#
    Q^\star_{\LLM, \lambda,h}(s,a)=r_h(s,a)+\sum_{s'\in\cS}P_h^*(s'|s,a)V^\star_{\LLM, \lambda,h+1}(s,a)\geq 0.\nonumber
    \#
    We also have 
    \#
    Q^\star_{\LLM, \lambda,h}(s,a)=r_h(s,a)+\sum_{s'\in\cS}P_h^*(s'|s,a)V^\star_{\LLM, \lambda,h+1}(s,a)\leq 1+\sum_{s'\in\cS}P_h^*(s'|s,a)(H-h)=H-h+1.\nonumber
\#
Using the property of constraint optimization, we have 
\#
V^\star_{\LLM, \lambda,h}(s)=\lambda\log\bigl(\sum_{a\in\cA}\pi_h^{\LLM}(a|s)\exp(Q^\star_{\LLM, \lambda,h}(s,a)/\lambda)\bigr).\nonumber
\#
since $0\leq Q^\star_{\LLM, \lambda,h}(s,a)\leq H+1-h$, we have $0\leq V^\star_{\LLM, \lambda,h}(s)\leq H+1-h$. Therefore, we conclude the proof of Lemma \ref{lm:bdn} using induction.
\end{proof}
   \subsection{Proof of Lemma \ref{lm:continuity}}\label{appp:continuity}
   \begin{proof}
    For $\bx=(x_1,\ldots,x_A)^\top$ and $\bbt=(\beta_1,\ldots,\beta_A)^\top$, we define 
\#\label{def:dual}
f(\bx,\bbt)=\bx^{\top}\bbt-\lambda\sum_{i=1}^A\beta_i\log\frac{\beta_i}{\tbt_i},\;\; f^*(\bx)  =\max_{\sum_{i=1}^A\beta_i=1,\beta_i> 0}f(\bx,\bbt),\;\; \bbt^*(x)=\argmax_{\sum_{i=1}^A\beta_i=1,\beta_i> 0}f(\bx,\bbt).
\#
We have the following lemma.
\begin{lemma}\label{lm:smtn}
    We have 
    \#
    f^*(\bx)=\lambda\log\bigl[\sum_{i=1}^A\tbt_i\exp(x_i/\lambda)\bigr],\nabla f^*(\bx) = \bbt^*(\bx),\beta_i(\bx)=\frac{\partial}{\partial x_i}f^*(\bx)=\frac{\tbt_i\exp(x_i/\lambda)}{\sum_{j=1}^A\tbt_j\exp(x_j/\lambda)}.\nonumber
    \#
    Moreover, we have $\Vert\nabla f^*(\bx_1)-\nabla f^*(\bx_2)\Vert_1\leq\frac{A}{\lambda}\Vert\bx_1-\bx_2\Vert_{\infty}$.
\end{lemma}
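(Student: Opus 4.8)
The plan is to treat $f^*(\bx)$ as the optimal value of a strictly concave maximization over the open simplex and to read off all four claims from its closed-form solution. First I would solve $\max_{\sum_i\beta_i=1}f(\bx,\bbt)$ by Lagrangian duality: introducing one multiplier $\mu$ for the equality constraint and setting $\partial/\partial\beta_i$ of $\bx^\top\bbt-\lambda\sum_i\beta_i\log(\beta_i/\tbt_i)+\mu(1-\sum_i\beta_i)$ to zero gives the stationarity condition $x_i-\lambda(\log(\beta_i/\tbt_i)+1)-\mu=0$, hence $\beta_i\propto\tbt_i\exp(x_i/\lambda)$. Normalizing so that $\sum_i\beta_i=1$ pins down the maximizer $\beta_i^*(\bx)=\tbt_i\exp(x_i/\lambda)/\sum_j\tbt_j\exp(x_j/\lambda)$, which is exactly the claimed form of $\bbt^*(\bx)$; strict concavity of the objective in $\bbt$ together with convexity of the feasible set guarantees this stationary point is the unique global maximizer. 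Substituting it back and using $\log(\beta_i^*/\tbt_i)=x_i/\lambda-\log Z$ with $Z=\sum_j\tbt_j\exp(x_j/\lambda)$ cancels the reward and entropy contributions, leaving $f^*(\bx)=\lambda\log Z$, which is the log-sum-exp formula.

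For the gradient identity I would invoke the envelope (Danskin) theorem: since $f(\bx,\bbt)$ is affine in $\bx$ for each fixed $\bbt$ and the maximizer $\bbt^*(\bx)$ is unique and smooth, $\nabla f^*(\bx)=\nabla_{\bx}f(\bx,\bbt)\big|_{\bbt=\bbt^*(\bx)}=\bbt^*(\bx)$. Equivalently, one simply differentiates the closed form $\lambda\log Z$ directly and recovers the same softmax expression, which simultaneously verifies the displayed identity $\partial f^*/\partial x_i=\beta_i(\bx)$.

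The quantitative Lipschitz bound is the only step requiring real work. I would differentiate $\bbt^*$ once more to obtain the Hessian $\nabla^2 f^*(\bx)=\tfrac1\lambda(\mathrm{diag}(\bbt^*)-\bbt^*\bbt^{*\top})$, the rescaled softmax Jacobian, and then write the gradient difference in integral form $\nabla f^*(\bx_1)-\nabla f^*(\bx_2)=\int_0^1\nabla^2 f^*(\bx_s)(\bx_1-\bx_2)\,ds$ along the segment $\bx_s=\bx_2+s(\bx_1-\bx_2)$, so that it suffices to bound the $\ell_\infty\to\ell_1$ operator norm of $\nabla^2 f^*$ uniformly on the path. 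Using the entrywise estimate $|\partial\beta_i/\partial x_j|=\tfrac1\lambda|\delta_{ij}\beta_i-\beta_i\beta_j|\le\beta_i/\lambda$, the $i$-th coordinate of $\nabla^2 f^*(\bx)\bv$ is bounded by $\sum_j(\beta_i/\lambda)|v_j|\le(A\beta_i/\lambda)\Vert\bv\Vert_\infty$; summing over $i$ and invoking $\sum_i\beta_i=1$ yields $\Vert\nabla^2 f^*(\bx)\bv\Vert_1\le(A/\lambda)\Vert\bv\Vert_\infty$, and integrating over $s$ gives the claimed $\Vert\nabla f^*(\bx_1)-\nabla f^*(\bx_2)\Vert_1\le(A/\lambda)\Vert\bx_1-\bx_2\Vert_\infty$.

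The main obstacle is precisely this last estimate: controlling the softmax Jacobian in the mismatched $\ell_\infty\to\ell_1$ norm pair. The crude per-entry bound $|\partial\beta_i/\partial x_j|\le\beta_i/\lambda$ already delivers the stated constant $A/\lambda$, and the only care needed is to retain the factor $\beta_i$ inside the sum so that $\sum_i\beta_i=1$ can be applied at the very end rather than bounding each $\beta_i$ by $1$ prematurely. I note in passing that keeping the exact coordinate $\tfrac1\lambda\beta_i(v_i-\sum_j\beta_j v_j)$ and using $|v_i-\sum_j\beta_j v_j|\le 2\Vert\bv\Vert_\infty$ would even sharpen the constant to $2/\lambda$, but the weaker $A/\lambda$ suffices for all subsequent arguments in the excerpt.
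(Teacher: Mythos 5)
Your proof is correct. For the closed form of the maximizer, the log-sum-exp formula for $f^*$, and the gradient identity, you follow essentially the same route as the paper: Lagrangian stationarity gives $\beta_i\propto\tbt_i\exp(x_i/\lambda)$, normalization pins down the softmax, substitution yields $f^*(\bx)=\lambda\log\bigl(\sum_{i=1}^A\tbt_i\exp(x_i/\lambda)\bigr)$, and differentiating the closed form (or invoking the envelope theorem) gives $\nabla f^*=\bbt^*$. Where you genuinely diverge is the Lipschitz estimate. The paper never touches second derivatives: it proves an auxiliary strong-convexity lemma (Lemma \ref{lm:cvcj}) showing that $g(\bbt)=\lambda\sum_{i=1}^A\beta_i\log(\beta_i/\tbt_i)$ satisfies $g(\bbt)\geq g(\bbt(\bx))+\bx^\top(\bbt-\bbt(\bx))+\frac{\lambda}{2A}\Vert\bbt-\bbt(\bx)\Vert_1^2$, applies this at $\bx_1$ and $\bx_2$ with the roles of the two maximizers swapped, adds the two inequalities to obtain the co-coercivity bound $\frac{\lambda}{A}\Vert\bbt(\bx_1)-\bbt(\bx_2)\Vert_1^2\leq(\bx_1-\bx_2)^\top\bigl(\bbt(\bx_1)-\bbt(\bx_2)\bigr)$, and finishes with H\"older. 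You instead compute the Hessian $\nabla^2 f^*(\bx)=\frac{1}{\lambda}\bigl(\mathrm{diag}(\bbt^*)-\bbt^*\bbt^{*\top}\bigr)$, bound its $\ell_\infty\to\ell_1$ operator norm entrywise while retaining the factor $\beta_i$ so that $\sum_i\beta_i=1$ can be used at the end, and integrate along the segment. Both arguments are rigorous, and your operator-norm bookkeeping is done in the right order. Your route is more elementary and self-contained (it needs no analogue of Lemma \ref{lm:cvcj}), and, as you observe, keeping the exact coordinate $\frac{\beta_i}{\lambda}\bigl(v_i-\sum_j\beta_j v_j\bigr)$ sharpens the constant to $2/\lambda$. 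What the paper's route buys is generality: it uses only strong convexity of the regularizer over the simplex, so the same argument yields Lipschitz continuity of the dual gradient even when the regularizer is not entropy and no closed-form softmax or Hessian is available.
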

\begin{proof}
    See \S\ref{appp:smtn} for a detailed proof.
\end{proof}
First, we have 
\#
f^*(\bx)-f^*(\bar{\bx})&=\nabla f^*(\bar{\bx})(\bx-\bar{\bx})++\int_0^1 \Bigl(\nabla f^*\bigl(t\bx+(1-t)\bar{\bx}\bigr)-\nabla f^*(\bar{\bx})\Bigr)^{\top}(\bx-\bar{\bx})\ud t\nonumber\\
&\leq \nabla f^*(\bar{\bx})(\bx-\bar{\bx})++\int_0^1 \Vert\nabla f^*\bigl(t\bx+(1-t)\bar{\bx}\bigr)-\nabla f^*(\bar{\bx})\Bigr\Vert_1\Vert\bx-\bar{\bx}\Vert_{\infty}\ud t.\nonumber
\#
By Lemma \ref{lm:smtn}, we have 
\#\label{lag7}
f^*(\bx)-f^*(\bar{\bx})&\leq \nabla f^*(\bar{\bx})(\bx-\bar{\bx})+\frac{A}{\lambda}\int_0^1 \Vert t\bx+(1-t)\bar{\bx}-\bar{\bx}\Bigr\Vert_{\infty}\Vert\bx-\bar{\bx}\Vert_{\infty}\ud t\nonumber\\
&=\nabla f^*(\bar{\bx})(\bx-\bar{\bx})+\frac{A}{2\lambda}\Vert\bx-\bar{\bx}\Vert_{\infty}^2
\#
We conclude the proof of Lemma \ref{lm:continuity} by combining \eqref{lag7} with Lemma \ref{lm:smtn}.
   \end{proof}
   \subsection{Proof of Lemma \ref{lm:smtn}}\label{appp:smtn}
   \begin{proof}
By the property of constraint optimization, we have $\frac{\partial}{\partial \beta} f\bigl(\bx,\bbt(\bx)\bigr)=c(1,\ldots,1)^{\top}$ for some constant $c$. By direct computation, we have 
\#
\frac{\partial}{\partial \beta} f\bigl(\bx,\bbt(\bx)\bigr)=x-\lambda\log\bbt+\lambda\log\tilde{\bbt}-\lambda(1,\ldots,1)^{\top},\nonumber
\#
which implies $\beta_i(\bx)\propto\tbt_i\exp(x_i/\lambda)$. Since $\sum_{i=1}^A \beta_i(\bx\propto)=1$, we have 
\#\label{lag5}
\beta_i(\bx)=\frac{\tbt_i\exp(x_i/\lambda)}{\sum_{j=1}^A\tbt_j\exp(x_j/\lambda)}. 
\#
By the definition of $f^*$, we have 
\#\label{lag6}
f^*(\bx)=f(\bx,\bbt_i(\bx))&=\frac{\sum_{i=1}^Ax_i\tbt_i\exp(x_i/\lambda)}{\sum_{i=1}^A\tbt_i\exp(x_i/\lambda)}-\frac{\sum_{i=1}^Ax_i\tbt_i\exp(x_i/\lambda)}{\sum_{i=1}^A\tbt_i\exp(x_i/\lambda)}+\lambda\log\bigl(\sum_{i=1}^A\tbt_i\exp(x_i/\lambda)\bigr)\nonumber\\
&=\lambda\log\bigl(\sum_{i=1}^A\tbt_i\exp(x_i/\lambda)\bigr).
\#
Combining \eqref{lag5} and \eqref{lag6}, we have $\nabla f^*(\bx) = \bbt(\bx)$. We define $\Delta = \{\bbt\mid \sum_{i=1}^A\beta_i=1,\beta_i> 0\}$, we have the following lemma.
    \begin{lemma}\label{lm:cvcj}
        For any $\bx,\bbt\in\Delta$, we have 
        \#
\lambda\sum_{i=1}^A\beta_i\log\frac{\beta_i}{\tbt_i}\geq \lambda\sum_{i=1}^A\beta_i(\bx)\log\frac{\beta_i(\bx)}{\tbt_i}+\bx^\top\bigl(\bbt-\bbt(\bx)\bigr)+\frac{\lambda}{2A}\Vert\bbt-\bbt(\bx) \Vert_1^2,\nonumber
        \#
        where $\bbt(\bx)$ is defined in \eqref{def:dual}.
    \end{lemma}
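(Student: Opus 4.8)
The plan is to observe that the left-hand side equals the scaled relative entropy $g(\bbt) := \lambda\sum_{i=1}^A \beta_i\log(\beta_i/\tbt_i)$ and that the claimed inequality is exactly a second-order (strong convexity) lower bound for $g$ expanded around the point $\bbt(\bx)$, with the resulting linear term simplified via the optimality characterization of $\bbt(\bx)$. First I would compute the Hessian of $g$ on the open simplex. Since the term $-\lambda\sum_i\beta_i\log\tbt_i$ is linear in $\bbt$, only the negative-entropy part contributes, giving $\nabla^2 g(\bbt)=\lambda\,\mathrm{diag}(1/\beta_1,\ldots,1/\beta_A)$. Because every coordinate of a point in $\Delta$ obeys $\beta_i\leq 1$, we obtain $1/\beta_i\geq 1$ and hence $\nabla^2 g(\bbt)\succeq \lambda I$ for all $\bbt\in\Delta$.

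Second, I would apply Taylor's theorem with integral remainder to $g$ along the segment from $\bbt(\bx)$ to $\bbt$. The segment stays in the relative interior of the simplex because $\bbt(\bx)$ has strictly positive entries (it is the softmax in \eqref{lag5}), and the Hessian bound above makes the remainder at least $\tfrac{\lambda}{2}\Vert\bbt-\bbt(\bx)\Vert_2^2$, yielding $g(\bbt)\geq g(\bbt(\bx))+\nabla g(\bbt(\bx))^\top(\bbt-\bbt(\bx))+\tfrac{\lambda}{2}\Vert\bbt-\bbt(\bx)\Vert_2^2$. To recover the $\ell_1$ form in the statement I would then downgrade the norm via the elementary Cauchy--Schwarz inequality $\Vert v\Vert_2^2\geq \tfrac{1}{A}\Vert v\Vert_1^2$, which is precisely where the factor $A$ in the denominator of the claimed bound originates; the sharper $\ell_1$ strong convexity from Pinsker's inequality would give $\lambda/2$ but is not needed here.

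Third, I would rewrite the linear term using the first-order stationarity condition for $\bbt(\bx)$ already derived in the proof of Lemma~\ref{lm:smtn}. There one has $\frac{\partial}{\partial\beta}f(\bx,\bbt(\bx))=\bx-\nabla g(\bbt(\bx))=c\,\mathbf{1}$ for some scalar $c$, so $\nabla g(\bbt(\bx))=\bx-c\,\mathbf{1}$. Substituting gives $\nabla g(\bbt(\bx))^\top(\bbt-\bbt(\bx))=\bx^\top(\bbt-\bbt(\bx))-c\,\mathbf{1}^\top(\bbt-\bbt(\bx))$, and the last term vanishes since $\bbt,\bbt(\bx)\in\Delta$ forces $\mathbf{1}^\top(\bbt-\bbt(\bx))=0$. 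Combining this with $g(\bbt(\bx))=\lambda\sum_i\beta_i(\bx)\log(\beta_i(\bx)/\tbt_i)$ produces exactly the inequality of Lemma~\ref{lm:cvcj}.

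I expect the main obstacle to be the boundary behavior of $g$: at points $\bbt\in\partial\Delta$ with some $\beta_i=0$ the gradient and Hessian of $g$ blow up, so the Taylor expansion is not literally valid at the endpoint $t=1$. I would handle this by carrying out the expansion on the half-open segment $t\in[0,1)$, noting that the remainder integrand is nonnegative and dominates $(1-t)\lambda\Vert\bbt-\bbt(\bx)\Vert_2^2$, and then passing to the limit $t\to 1$ using that $g(\bbt)$ stays finite because $\beta_i\log\beta_i\to 0$. The remaining ingredients---the Hessian computation, the norm comparison, and the cancellation of the Lagrange term---are routine.
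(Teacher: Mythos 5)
Your proposal is correct and follows essentially the same route as the paper's proof: a second-order expansion of $g(\bbt)=\lambda\sum_i\beta_i\log(\beta_i/\tbt_i)$ around $\bbt(\bx)$ with the Hessian bound $\lambda\,\mathrm{diag}(1/\beta_i)\succeq\lambda I$ (the paper writes this as a double integral of $\lambda/v$ with $v\le 1$), the Cauchy--Schwarz downgrade $\Vert\cdot\Vert_2^2\ge\frac{1}{A}\Vert\cdot\Vert_1^2$, and cancellation of the Lagrange-multiplier term against $\mathbf{1}^\top(\bbt-\bbt(\bx))=0$, which the paper does by computing $\nabla g(\bbt(\bx))-\bx$ explicitly as a constant vector. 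Your extra limiting argument at the boundary is unnecessary since $\Delta$ is defined with $\beta_i>0$, but it is harmless.
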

    \begin{proof}
        See \S\ref{appp:cvcj} for a detailed proof.
    \end{proof}
By Lemma \ref{lm:cvcj}, we have 
\#
\lambda\sum_{i=1}^A\beta_i(\bx_2)\log\frac{\beta_i(\bx_2)}{\tbt_i}\geq \lambda\sum_{i=1}^A\beta_i(\bx_1)\log\frac{\beta_i(\bx_1)}{\tbt_i}+\bx_1^\top\bigl(\bbt(\bx_2)-\bbt(\bx_1)\bigr)+\frac{\lambda}{2A}\Vert\bbt(\bx_2)-\bbt(\bx_1) \Vert_1^2,\nonumber\\
\lambda\sum_{i=1}^A\beta_i(\bx_1)\log\frac{\beta_i(\bx_1)}{\tbt_i}\geq \lambda\sum_{i=1}^A\beta_i(\bx_2)\log\frac{\beta_i(\bx_2)}{\tbt_i}+\bx_2^\top\bigl(\bbt(\bx_1)-\bbt(\bx_2)\bigr)+\frac{\lambda}{2A}\Vert\bbt(\bx_2)-\bbt(\bx_1) \Vert_1^2,\nonumber
\#
which implies 
\#
\frac{\lambda}{A}\Vert\bbt(\bx_2)-\bbt(\bx_1) \Vert_1^2\leq(\bx_1-\bx_2)^\top\bigl(\bbt(\bx_1)-\bbt(\bx_2)\bigr)\leq \Vert\bbt(\bx_2)-\bbt(\bx_1) \Vert_1\Vert\bx_1-\bx_2 \Vert_{\infty}.\nonumber
\#
Therefore, we have 
\#
\frac{\lambda}{A}\Vert\bbt(\bx_2)-\bbt(\bx_1) \Vert_1\leq \Vert\bx_1-\bx_2 \Vert_{\infty},\nonumber
\#
which concludes the proof of Lemma \ref{lm:smtn}
   \end{proof}
   
   \subsection{Proof of Lemma \ref{lm:cvcj}}\label{appp:cvcj}
   \begin{proof}
    Let $\beta_i$ be the $i$-th element of $\bbt$, and $\beta_i(\bx)$ be the $i$-th element of $\bbt$. We define $g(\bbt)=\lambda\sum_{i=1}^A\beta_i\log\frac{\beta_i}{\tbt_i}$. By the property of the function $g$, we have 
        \#
g(\bbt)-g\bigl(\bbt(\bx)\bigr)&= \nabla g\bigl(\bbt(\bx)\bigr) ^{\top}\bigl(\bbt-\bbt(\bx)\bigr)+\sum_{i=1}^A\int_{\beta_i(\bx)}^{\beta_i}\int_{\beta_i(\bx)}^u \frac{\lambda}{v}\ud v\ud u.\nonumber
\#
Since $\beta_i<1$, we have
\#\label{lag3}
g(\bbt)-g\bigl(\bbt(\bx)\bigr)&\geq \nabla g\bigl(\bbt(\bx)\bigr) ^{\top}\bigl(\bbt-\bbt(\bx)\bigr)+\lambda\sum_{i=1}^A\int_{\beta_i(\bx)}^{\beta_i}\int_{\beta_i(\bx)}^u 1\ud v\ud u\\
&=\nabla g\bigl(\bbt(\bx)\bigr) ^{\top}\bigl(\bbt-\bbt(\bx)\bigr)+\frac{\lambda}{2}\sum_{i=1}^A\vert\beta_i(\bx)-\beta_i\vert^2\nonumber\\
& \geq  \nabla g\bigl(\bbt(\bx)\bigr) ^{\top}\bigl(\bbt-\bbt(\bx)\bigr)+\frac{\lambda}{2A}\Vert \bbt(\bx)-\bbt\Vert_1^2.\nonumber
        \#
        By the definition of the function $g$, we have 
        \#\label{lag1}
        \nabla g\bigl(\bbt(\bx)\bigr) = (\lambda\log\beta_1+\lambda-\lambda\log\tbt_1,\ldots,\lambda\log\beta_A+\lambda-\lambda\log\tbt_A)^{\top}.
        \#
        By the definition of $\bbt(\bx)$ in \eqref{def:dual}, we can easily obtain 
        \#\label{lag2}
\beta_i(\bx)=\frac{\tbt_i\exp(x_i/\lambda)}{\sum_{i=1}^A\tbt_i\exp(x_i/\lambda)}.
        \#
        Combining \eqref{lag1} with \eqref{lag2}, we have 
        \#
        \nabla g\bigl(\bbt(\bx)\bigr) = \bx +\lambda\Bigl(1-\log\bigl(\sum_{i=1}^A\tbt_i\exp(x_i/\lambda)\bigr)\Bigr)(1,1,\ldots,1)^{\top}.\nonumber
        \#
        Therefore, we have 
        \#
        &\bigl(\nabla g\bigl(\bbt(\bx)\bigr) -\bx\bigr)^{\top}\bigl(\bbt-\bbt(\bx)\bigr)\nonumber\\
        &\quad\quad=\lambda\Bigl(1-\log\bigl(\sum_{i=1}^A\tbt_i\exp(x_i/\lambda)\bigr)\Bigr)(1,1,\ldots,1)^{\top}\bbt-\lambda\Bigl(1-\log\bigl(\sum_{i=1}^A\tbt_i\exp(x_i/\lambda)\bigr)\Bigr)(1,1,\ldots,1)^{\top}\bbt(\bx).\nonumber
        \#
        By the definition of the region $\Delta$ in Lemma \ref{lm:smtn}, we have $(1,1,\ldots,1)^{\top}\bbt=(1,1,\ldots,1)^{\top}\bbt(\bx)=1$. Therefore, we have 
        \#\label{lag4}
        \bigl(\nabla g\bigl(\bbt(\bx)\bigr) -\bx\bigr)^{\top}\bigl(\bbt-\bbt(\bx)\bigr)=0.
        \#
        We conclude the proof of Lemma \ref{lm:cvcj} by combining \eqref{lag3} and \eqref{lag4}.
   \end{proof}

\end{document}